\newtheorem{lemma}{Lemma}
\newtheorem{remark}{Remark}
\newtheorem{example}{Example}
\DeclareMathOperator*{\argmin}{arg\,min}
\DeclareMathOperator*{\argmax}{arg\,max}
\begin{document}

%
\runningtitle{Controllable RANSAC-based Anomaly Detection via Hypothesis Testing}

%
\runningauthor{Le Hong Phong, Ho Ngoc Luat, Vo Nguyen Le Duy}

\twocolumn[

\aistatstitle{Controllable RANSAC-based Anomaly Detection via \\Hypothesis Testing}

\aistatsauthor{Le Hong Phong$^{1, 2}$, Ho Ngoc Luat$^{1, 2}$, Vo Nguyen Le Duy$^{1, 2, 3, \ast}$}
\vspace{5pt}
\aistatsaddress{ 
$^1$University of Information Technology, Ho Chi Minh City, Vietnam \\ 
$^2$Vietnam National University, Ho Chi Minh City, Vietnam \\
$^3$RIKEN
}]

\begin{abstract}
Detecting the presence of anomalies in regression models is a crucial task in machine learning, as anomalies can significantly impact the accuracy and reliability of predictions. 
Random Sample Consensus (RANSAC) is one of the most popular robust regression methods for addressing this challenge.
However, this method lacks the capability to guarantee the reliability of the anomaly detection (AD) results.
In this paper, we propose a novel statistical method for testing the AD results obtained by RANSAC, named CTRL-RANSAC (controllable RANSAC).
The key strength of the proposed method lies in its ability to control the probability of misidentifying anomalies below a pre-specified level $\alpha$ (e.g., $\alpha = 0.05$).
By examining the selection strategy of RANSAC and leveraging the Selective Inference (SI) framework, we prove that achieving controllable RANSAC is indeed feasible.
Furthermore, we introduce a more strategic and computationally efficient approach to enhance the true detection rate and overall performance of the CTRL-RANSAC.
Experiments conducted on synthetic and real-world datasets robustly support our theoretical results, showcasing the superior performance of the proposed method.

\end{abstract}

\section{Introduction} \label{sec:intro}

\begin{figure*}[!t] 
\centering
    \includegraphics[width=\linewidth]{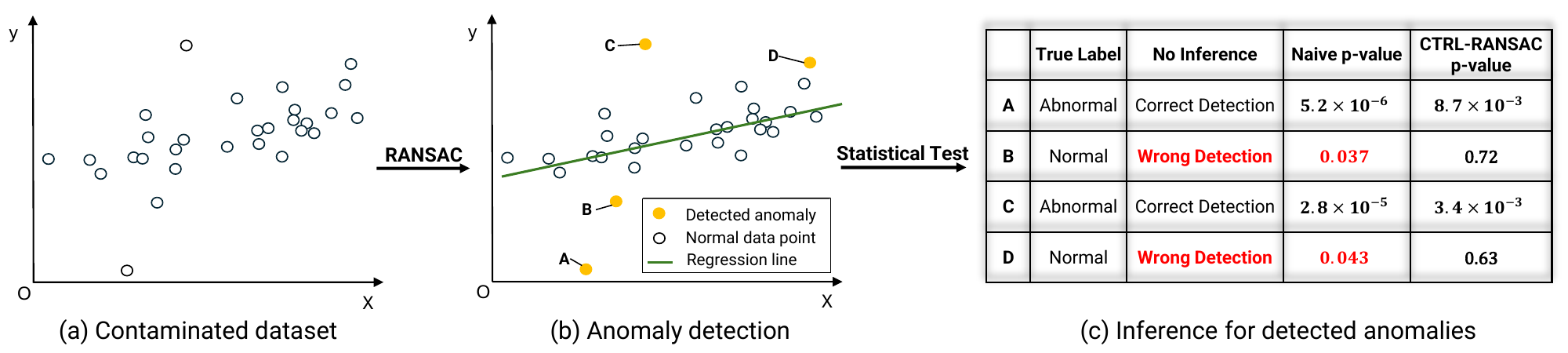}
    \caption{Illustration of the proposed CTRL-RANSAC method. Performing AD without inference produces wrong anomalies (\textbf{B, D}). The naive $p$-values are even small for falsely detected anomalies. With the  $p$-value provided by CTRL-RANSAC, we can identify both false positive (FP) and true positive (TP) detections, i.e., large \textit{p}-values for FPs and small \textit{p}-values for TPs.}
    \label{fig:illustration}
    \vspace{-5pt}
\end{figure*}

Anomaly detection (AD) in regression models is a crucial problem in machine learning, as highlighted by  \cite{aggarwal2017outlier}. 
Identifying these anomalies is crucial,
as they can impact the performance and interpretability of regression analyses.
One effective approach for handling this task is the RANSAC \citep{fischler1981random}.
%
Compared to methods like least absolute deviation (LAD) regression \citep{andrews1974robust} and Huber regression \citep{huber1973robust}, RANSAC offers advantages in handling datasets with many outliers. While Huber and LAD regressions mitigate the influence of outliers by minimizing the Huber loss or absolute deviations, they still incorporate all data points, which can bias the results. In contrast, RANSAC iteratively selects subsets of inliers and discards outliers entirely, leading to more accurate model fits.
RANSAC has been widely applied in several applications such as wide-baseline matching \citep{mishkin2015mods}, multi-model fitting \citep{isack2012energy}, and structure-from-motion \citep{schonberger2016structure, barath2021efficient}.

There is a critical concern regarding the potential for erroneous results.
RANSAC has a high tendency to detect anomalies even when none exist in the data generation process. 
This can cause serious implications, especially in high-stakes decision-making scenarios.
For example, in the medical field, mistakenly identifying a healthy individual as unhealthy could lead to unnecessary and potentially harmful treatments. 
Similarly, in cybersecurity, incorrect identification of cyberattacks can waste resources and result in substantial financial losses.
These errors are commonly denoted as \emph{false positives}, and there is a critical necessity of properly controlling the false positive rate (FPR).

We also emphasize the importance of controlling the false negative rate (FNR). 
In the literature of statistics, a common approach is to first control the FPR at a pre-determined level $\alpha$, e.g., 0.05, while simultaneously striving to minimize the FNR, i.e., maximizing the true positive rate (TPR $=$ 1 $-$ FNR), by showing empirical evidences. In line with this established practice, our paper follows the same procedure.
However, in our setting, addressing this task presents a computational challenge due to the need for exhaustive enumeration. 
Consequently, developing a computationally efficient approach is practically important.

To the best of our knowledge, none of the existing methods efficiently control the FPR of AD results obtained by RANSAC.
\cite{tsukurimichi2022conditional} proposed a method for testing the anomalies when they are detected by LAD and Huber regressions.
In this method, controlling the FPR is achievable by relying on the KKT optimality conditions inherent in the convex optimization problems of LAD and Huber regressions, which are not present in RANSAC.
To achieve FPR control in the case of RANSAC, we need to carefully examine each step of the algorithm.
Consequently, the method proposed in \cite{tsukurimichi2022conditional} is not applicable to our setting. 
%

To address the challenge of controlling the FPR, our idea is to leverage \emph{Selective Inference} \citep{lee2016exact} to perform statistical tests on the detected anomalies.
However, employing existing SI methods \citep{lee2016exact, chen2020valid} results in significantly high FNR. 
Although there are techniques proposed in the literature to resolve the high FNR problem \citep{le2021parametric, sugiyama2021more, tsukurimichi2022conditional, le2024cad}, introducing these methods to RANSAC-based AD is non-trivial because they are designed for specific machine learning algorithms. 
Furthermore, these methods suffer from computational drawbacks due to the need of exhaustively searching the data space.
Therefore, we need to propose a computationally efficient approach to minimize the FNR while properly controlling the FPR.

\textbf{Contributions.} Our contributions are as follows:

$\bullet$ We propose a novel statistical method, named \emph{CTRL-RANSAC}, for testing the results of RANSAC-based AD with controllable FPR.
To our knowledge, this is the first method capable of providing valid $p$-values for anomalies detected by RANSAC.

$\bullet$ We introduce an efficient approach to enhance the TPR, i.e., minimize the FNR. 
This approach employs the concept of Dynamic Programming to achieve a significant reduction in computational time.

$\bullet$ We conduct experiments on both synthetic and real-world datasets to support our theoretical results on successful FPR control, showcasing that CTRL-RANSAC achieves superior performance with higher TPR than existing methods, enhance computational efficiency, and good results in practical applications.


\begin{table}[!t]
\renewcommand{\arraystretch}{1.2}
\centering
\caption{The importance of the proposed method lies in its ability to control the False Positive Rate (FPR).}
\vspace{-5pt}
\begin{tabular}{ |l|c|c| } 
  \hline
  & $N = 120$ & $N = 240$ \\
  \hline
  \textbf{No Inference} & FPR = 1.0 & FPR = 1.0  \\
   \hline
  \textbf{Naive Inference} & 0.23 & 0.25  \\
  \hline
  \textbf{CTRL-RANSAC} & \textbf{0.04} & \textbf{0.05}  \\
  \hline
\end{tabular}
\label{tbl:example_intro}
\vspace{-10pt}
\end{table}

\begin{example}
To show the importance of the proposed CTRL-RANSAC, we present an example presented in Fig. \ref{fig:illustration}.
Given a dataset for a regression task of examining the relationship between medication dosage and blood pressure reduction in a group of patients, the goal is to detect those with abnormal reductions in blood pressure.
We applied RANSAC to identify anomalies and successfully detected two abnormal data points. 
However, the method erroneously identified two patients with normal blood pressure as anomalies.
To address this issue, we conducted an additional hypothesis testing step using the proposed $p$-values, which allowed us to successfully identify both true positive and false positive detections. 
Furthermore, we repeated the experiments $N$ times and the FPR results are shown in Tab. \ref{tbl:example_intro}.
With the proposed CTRL-RANSAC, we successfully controlled the FPR at $\alpha = 0.05$, meaning the probability of mistakenly identifying normal cases as anomalies was controlled below $5\%$, which competing methods were unable to achieve.

\end{example}

\textbf{Related works.}
In the context of regression problems under the presence of anomalies, several studies have been proposed in the literature for AD tasks using traditional statistical theories  \citep{srikantan1961testing, joshi1972some, ellenberg1973joint, ellenberg1976testing, rahmatullah2005identifying}.
Additionally, there have been investigations into testing the anomalies based on statistical hypothesis testing.
For example, in \cite{pan1995multiple} and \cite{srivastava1998outliers}, the likelihood ratio test is used to determine whether an individual is an anomaly. 
%
%
Unfortunately, these classical methods assume that anomalies are pre-determined in advance.
When the anomalies are identified by AD algorithms, the assumption is violated, leading to invalid classical tests, i.e., failure to control the FPR.

Controlling the FPR when using classical inference methods requires the application of multiple testing correction.
The Bonferroni correction is a widely used technique for this purpose. 
It adjusts the pre-determined level $\alpha$ by dividing it by the correction factor, which scales exponentially to a value of $2^n$, where $n$ is the number of instances.
As $n$ increases, this exponential growth can lead to an excessively large correction factor, resulting in overly conservative statistical tests and a significantly high FNR.

In recent years, SI has emerged as a promising approach to address the invalidity of traditional inference methods. 
Unlike conventional methods that employ overly conservative correction factors of $2^n$, SI address the problem through conditional inference. 
By conditioning on a specific set of identified anomalies, SI effectively reduces the correction factor to 1.
This is the basic concept of the conditional SI introduced in the seminal work of  \cite{lee2016exact}.

After the seminal work \cite{lee2016exact}, has been actively studied and applied to various problems, such as feature selection~\citep{fithian2014optimal, tibshirani2016exact, yang2016selective, sugiyama2021more, duy2021more}, change point detection~\citep{umezu2017selective, hyun2018post, duy2020computing, sugiyama2021valid, jewell2022testing}, clustering~\citep{lee2015evaluating, inoue2017post, gao2022selective}, and segmentation~\citep{tanizaki2020computing}.
Furthermore, SI can be applied to conduct statistical inference in deep learning models \citep{duy2022quantifying, miwa2023valid, shiraishi2024statistical}

The most relevant works to this paper are \citet{chen2020valid}, \citet{tsukurimichi2022conditional}, and \citet{le2024cad}.
In \citet{chen2020valid}, the authors developed a method for testing the features of a linear model after excluding anomalies.
While their focus was \emph{not} on testing the anomalies directly, their approach inspired the work in \citet{tsukurimichi2022conditional}.
In \citet{tsukurimichi2022conditional}, the authors introduced a SI approach for testing outliers identified by LAD and Huber regressions, but the reliance on KKT conditions of the convex optimization problems limits its applicability.
%
%
Consequently, their method is unsuitable for heuristic algorithms such as RANSAC.

\citet{le2024cad} introduced an approach for testing anomalies in the context of domain adaptation.
However, their focus on unsupervised AD that differs from the setting we consider in this paper, which is AD in the context of regression problems.
Additionally, if we leverage and apply the ideas of the methods proposed in \citet{tsukurimichi2022conditional} and \citet{le2024cad} to our problem, we would still encounter computational challenges due to the need for exhaustive searches throughout the data space.

\section{Problem Setup} \label{sec:problem_setup}

To formulate the problem, we consider a regression setup with a feature matrix $X \in \RR^{n \times p}$ and a random response vector defined by
\begin{equation} \label{eq:random_vector}
	{\bm Y} = (Y_1, ..., Y_n)^\top \sim \NN({\bm \mu}, \Sigma),
\end{equation}
where $n$ is the number of instances, $p$ is the number of features, ${\bm \mu}$ is the unknown mean vector, and $\Sigma \in \RR^{n \times n}$ is a covariance matrix, which is known or estimable from independent data.
The feature matrix $X$ is assumed to be non-random.
The goal is to conduct statistical tests on the anomalies obtained by applying the RANSAC algorithm to the data.

\subsection{RANSAC-based Anomaly Detection}

The procedure of detecting anomalies using RANSAC \citep{fischler1981random} is as follows:

\textbf{Step 1 (fit a regression model with RANSAC)}:

$\bm a)~$ \textbf{Iterative repetition}. 
For $b = 1$ to $B$:
\begin{itemize}
	\item[i.] Randomly select a subset from the original data:
\begin{align*}
	s^{(b)} \subseteq [n] = \{1, 2, ..., n\}.
\end{align*}
%
	\item[ii.] Fit the regression model with the selected subset:
\begin{align} \label{eq:b_iter_model_fitting}
	\hat{\bm \beta}^{(b)} = \argmin \limits_{\bm \beta^{(b)} \in \RR^p}
	\frac{1}{2} 
	\left | \left |
	\bm Y_{s^{(b)}} - X_{s^{(b)}}
	\bm \beta^{(b)}
	\right |\right|_2^2,
\end{align}
where $X_{s^{(b)}}$ and $\bm Y_{s^{(b)}}$ are the sub-matrix of $X$ and sub-vector of $\bm Y$ made up of rows in the set $s^{(b)}$.

	\item[iii.] Determine a set of \emph{inliers} (i.e., consensus set):
%
\begin{align} \label{eq:select_inliers}
    \cI^{(b)} = 
    \Big\{ 
            i \in [n]  \; \big| \;  
            \left(
                \bm Y_i - X_i^\top \hat{\bm \beta}^{(b)}
            \right)^2  
            \leq 
            \tau
    \Big\},
\end{align}
where $\tau$ is a pre-determined threshold.
%
\end{itemize}

$\bm b)~$ \textbf{Selecting the ``optimal'' model.} 
After the fixed $B$ number of trials, the optimal model is the one that has the largest consensus set, i.e., for $b \in [B]$,
\begin{align}\label{eq:optimal_consensus_set}
	\cI  = \argmax \limits_{\cI^{(b)}}
	\big |\cI^{(b)} \big |.
\end{align}
%
%
In case of multiple models have the same largest set of inliers, the first encountered model is selected.

\textbf{Step 2 (identifying anomalies)}: after identifying the optimal set of inliers $\cI$ in step 1, we obtain a set $\cO$ of indices of anomalies:
\begin{align} \label{eq:detected_anomalies}
	\cO = [n] \setminus \cI.
\end{align}

\subsection{Statistical Inference and Decision Making}

Our goal is to assess if the anomalies in \eq{eq:detected_anomalies} are truly abnormal. 
We consider testing the detected anomalies using the model after removing the anomalies.

\paragraph{The null and alternative hypotheses.} To achieve the task, we consider the following statistical test:
\begin{align} \label{eq:hypotheses}
    	{\rm H}_{0, i} : \bm \mu_i = X^{\top}_i \bm \beta_{-\cO}
   	\quad
	\text{vs.}
	\quad
    	{\rm H}_{1, i} : \bm \mu_i \neq X^{\top}_i \bm \beta_{-\cO},
\end{align}
$\forall i \in \cO$.
The $\bm \beta_{-\cO}$ is the population least square estimate after removing the set of anomalies:
\begin{align*}
	\bm \beta_{-\cO} = 
	\left ( X_{- \cO} \right )^{+} \bm \mu_{- \cO},
\end{align*}
where
$(X_{- \cO})^{+}$ is the 
\emph{pseudo-inverse} of $X_{- \cO}$,
\begin{align*}
X_{- \cO} = I^n_{-\cO} X 
\quad \text{and} \quad  \bm \mu_{- \cO} = I^n_{-\cO} \bm \mu,
\end{align*}
with $I^n_{-\cO} \in \RR^{n \times n}$ is a diagonal matrix in which the $i^{\rm th}$ diagonal element set to $0$ if $i \in \cO$, and $1$ otherwise.

In other words, we aim to test if each of the detected anomalies $i \in \cO$ is truly deviated from the model fitted after the set of anomalies is removed.
The above hypotheses was also used in \citet{tsukurimichi2022conditional}

\paragraph{Test statistic.} In order to test the hypotheses, the test statistic is defined as follows:
\begin{align}\label{eq:test_statistic}
	T_i = \bm Y_i - X_i^\top \hat{\bm \beta}_{- \cO},
\end{align}
where $\hat{\bm \beta}_{- \cO} = \left ( X_{- \cO} \right )^{+} \bm Y_{- \cO}$. 
We can re-write the test statistic in the form of a linear contrast w.r.t $\bm Y$:
\begin{align} \label{eq:test_statistic_linear_contrast}
	T_i = \bm \eta_i^\top \bm Y,
\end{align}
where $\bm \eta_i$ is the direction of the test statistic
\begin{equation} \label{eq:test_direction}
    \bm \eta_i = 
    \Big(
        \bm e^{\top}_i  
        - 
        X^\top_i \big(X_{-\cO}\big)^{+} I_{-\cO}^n
    \Big)^{\top},
\end{equation}
$\bm e_i \in \mathbb{R}^{n}$ 
is a basis vector with a 1 at position \textit{i}\textsuperscript{th}.

%

\paragraph{Compute $p$-value and decision making.} After obtaining the test statistic from \eq{eq:test_statistic_linear_contrast}, we need to compute the $p$-value.
Given a significance level $\alpha \in [0, 1]$, typically set at 0.05, we reject the null hypothesis ${\rm H}_{0, i}$ and  and conclude that $\bm Y_i$ is an anomaly if the corresponding 
$p$-value is less than or equal to $\alpha$.
On the other hand, if the $p$-value exceeds $\alpha$, we fail to reject the null hypothesis and conclude that there is not sufficient evidence to assert that $\bm Y_i$ is an anomaly.

\paragraph{Difficulty in computing a valid $p$-value.}
The conventional (naive) $p$-value is defined as: 
\begin{align*}
	p^{\rm naive}_i = 
	\mathbb{P}_{{\rm H}_{0, i}} 
	\Big ( 
		\left | \bm \eta_i^\top \bm Y \right |
		\geq 
		\left | \bm \eta_i^\top \bm Y^{\rm obs}  \right |
	\Big ), 
\end{align*}
where $\bm Y^{\rm obs}$ is an observation (realization) of the random vector $\bm Y$.
If the hypotheses in \eq{eq:hypotheses} are fixed in advanced, i.e., non-random, the vector $\bm \eta_i$ is independent of the data and RANSAC algorithm.
Thus, the naive $p$-value is valid in the sense that 
\begin{align} \label{eq:valid_p_value}
	\mathbb{P} \Big (
	\underbrace{p_i^{\rm naive} \leq \alpha \mid {\rm H}_{0, i} \text{ is true }}_{\text{a false positive}}
	\Big) = \alpha, ~~ \forall \alpha \in [0, 1],
\end{align} 
i.e., the probability of obtaining a false positive is controlled under the pre-specified level $\alpha$.
However, in our setting, the hypotheses are actually \emph{not} fixed; they are defined based on the data and RANSAC algorithm.
As a result, the property of a valid $p$-value in \eq{eq:valid_p_value} is no longer satisfied.
Hence, the naive $p$-value is \emph{invalid}.

\section{PROPOSED CTRL-RANSAC
}
\label{sec:method}

In this section, we present the details of the proposed CTRL-RANSAC method. 
We introduce valid $p$-values for anomalies detected by RANSAC by leveraging the SI framework \citep{lee2016exact}. 
Additionally, we propose an efficient approach to calculate these $p$-values based on the concept of Dynamic Programming.

\subsection{The valid \textit{p}-value in CTRL-RANSAC}

In order to compute the valid \textit{p}-value, we first need to identify the distribution of the test statistic in \eq{eq:test_statistic_linear_contrast}.
We accomplish this by employing the concept of SI, specifically by examining the sampling distribution of the test statistic \emph{conditional} on the AD results:
\begin{align} \label{eq:conditional_distribution}
	\bP
	\Big ( 
	\bm \eta_i^\top \bm Y
	\mid 
	\cO(\bm Y) = \cO^{\rm obs}
	\Big ),
\end{align}
where $\cO(\bm Y) \subseteq [n]$ indicates the set of anomalies identified by applying RANSAC to any random vector $\bm Y$, and $\cO^{\rm obs} = \cO(\bm Y^{\rm obs})$ is the set of anomalies obtained from an observation $\bm Y^{\rm obs}$ of the random $\bm Y$.

Based on the distribution of the test statistic in \eq{eq:conditional_distribution}, we introduce the \emph{selective $p$-value} defined as:
\begin{align}\label{eq:selective_p}
	p_i^{\rm selective} = 
	\bP_{{\rm H}_{0, i}} 
	\Big ( 
		\left |\bm \eta_i^\top \bm Y \right |
		\geq
		\left |\bm \eta_i^\top \bm Y^{\rm obs} \right |
		~
		\Big | 
		~\cE 
	\Big ),
\end{align}
where $\cE$ is the conditioning event defined as:
\begin{align} \label{eq:cE}
	\cE = 
	\left \{
		\cO(\bm Y) = \cO^{\rm obs},
		\cQ(\bm Y) = \cQ^{\rm obs}
	\right \}.
\end{align}
The $\cQ(\bm Y)$ is the sufficient statistic of the \emph{nuisance component} defined as:
\begin{equation} \label{eq:nuisance}
    \cQ(\bm Y) := \big( I^n -\bm  b \bm \eta^\top_i \big) \bm Y,
\end{equation}
where $\bm b = \Sigma \bm \eta_i(\bm \eta^\top_i \Sigma \bm \eta_i)^{-1}$ and $\cQ^{\rm obs} = \cQ(\bm Y^{\rm obs})$.


\begin{remark}
The nuisance component $\cQ(\bm Y)$ corresponds to the component $\bm z$ in the seminal work by \citet{lee2016exact} (see Sec. 5, Eq. (5.2), and Theorem 5.2). The additional conditioning on $\cQ(\bm Y)$ is required for technical reasons, specifically to facilitate tractable inference. This is the standard approach in SI literature and is used in almost all the cited SI-related works.
\end{remark}

\begin{lemma} \label{lemma:valid_selective_p}
The selective $p$-value in \eq{eq:selective_p} is a valid $p$-value that satisfies the following property:
\begin{align*}
	\bP_{{\rm H}_{0, i}}  \Big (
	p_i^{\rm selective} \leq \alpha
	\Big) = \alpha, ~~ \forall \alpha \in [0, 1].
\end{align*} 
\end{lemma}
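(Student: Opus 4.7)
The plan is to argue that, conditional on the event $\mathcal{E}$, the scalar test statistic $\bm\eta_i^\top\bm Y$ follows a univariate truncated Gaussian law under $H_{0,i}$, so that the selective $p$-value, being the tail probability under that exact conditional law, is a pivot uniformly distributed on $[0,1]$; the unconditional claim then follows by the tower property.

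First I would exploit the canonical orthogonal decomposition
\[
\bm Y \;=\; \bm b\,(\bm\eta_i^\top \bm Y) \;+\; \mathcal{Q}(\bm Y),
\qquad
\bm b \;=\; \Sigma\bm\eta_i(\bm\eta_i^\top\Sigma\bm\eta_i)^{-1},
\]
which is the standard split used in SI since \citet{lee2016exact}. Because $\bm Y$ is Gaussian and the two summands are jointly Gaussian and uncorrelated by construction of $\bm b$, the conditional law of $\bm\eta_i^\top\bm Y$ given $\mathcal{Q}(\bm Y)=\mathcal{Q}^{\rm obs}$ is $\mathcal{N}(\bm\eta_i^\top\bm\mu,\bm\eta_i^\top\Sigma\bm\eta_i)$, and in particular its variance does not depend on the nuisance realization. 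Substituting $\bm\eta_i=(\bm e_i-X_i^\top(X_{-\mathcal{O}})^{+}I^n_{-\mathcal{O}})^\top$ and $\bm\beta_{-\mathcal{O}}=(X_{-\mathcal{O}})^{+}\bm\mu_{-\mathcal{O}}$ immediately gives $\bm\eta_i^\top\bm\mu=\bm\mu_i-X_i^\top\bm\beta_{-\mathcal{O}}$, which vanishes under $H_{0,i}$. Hence, conditional on $\mathcal{Q}(\bm Y)=\mathcal{Q}^{\rm obs}$ and under the null, $\bm\eta_i^\top\bm Y\sim\mathcal{N}(0,\bm\eta_i^\top\Sigma\bm\eta_i)$.

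Next I would observe that once $\mathcal{Q}(\bm Y)$ is frozen at $\mathcal{Q}^{\rm obs}$, the full vector $\bm Y$ is an affine function of the scalar $z=\bm\eta_i^\top\bm Y$, namely $\bm Y=\bm b\,z+\mathcal{Q}^{\rm obs}$. Consequently the event $\{\mathcal{O}(\bm Y)=\mathcal{O}^{\rm obs}\}$ reduces to a deterministic subset $\mathcal{Z}\subseteq\mathbb{R}$ of admissible values of $z$; the precise geometry of $\mathcal{Z}$ is exactly what the subsequent algorithmic sections compute, but it is irrelevant for validity. Therefore the law of $\bm\eta_i^\top\bm Y$ given $\mathcal{E}$ under $H_{0,i}$ is $\mathcal{N}(0,\bm\eta_i^\top\Sigma\bm\eta_i)$ truncated to $\mathcal{Z}$.

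The final step is the probability integral transform. Let $F$ denote the CDF of this truncated Gaussian; then $p_i^{\rm selective}$ as defined in \eqref{eq:selective_p} can be written as $2\min\{F(\bm\eta_i^\top\bm Y),\,1-F(\bm\eta_i^\top\bm Y)\}$, which under the truncated distribution is $\mathrm{Uniform}(0,1)$. Hence $\mathbb{P}_{H_{0,i}}(p_i^{\rm selective}\le\alpha\mid\mathcal{E})=\alpha$ almost surely, and marginalising over $\mathcal{E}$ via the tower law gives the stated equality. The one technical point to verify carefully is measurability of $\mathcal{Z}$ in $\mathcal{Q}^{\rm obs}$ together with $\mathcal{Z}$ having positive Lebesgue measure on the event $\mathcal{O}(\bm Y)=\mathcal{O}^{\rm obs}$, so that the conditional density exists; both are immediate because RANSAC partitions $\mathbb{R}^n$ into finitely many strata indexed by its discrete choices, making each stratum a finite union of polyhedral pieces whose intersection with the line $\bm b z+\mathcal{Q}^{\rm obs}$ is a finite union of intervals. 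This measure-theoretic bookkeeping is the only genuine obstacle; the pivot mechanism itself is routine once the decomposition is in place.
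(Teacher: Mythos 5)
Your proposal follows essentially the same route as the paper's proof: establish that $\bm \eta_i^\top \bm Y$ given $\cE$ is a truncated Gaussian under ${\rm H}_{0,i}$, deduce conditional uniformity of the $p$-value via the probability integral transform, and then marginalize over the nuisance component and the selection event (your tower-property step is exactly the paper's two-stage integration over $\cQ^{\rm obs}$ and summation over $\cO^{\rm obs}$); your derivation of the truncated law from the orthogonal decomposition is in fact more explicit than the paper's, which simply asserts it. One small correction: for an asymmetric truncation set $\cZ$ the identity $p_i^{\rm selective} = 2\min\{F(\bm \eta_i^\top \bm Y),\, 1-F(\bm \eta_i^\top \bm Y)\}$ does not hold, since $\bP(|Z|\geq |z|) = 1-F(|z|)+F(-|z|)$ in general; the correct and equally routine argument is to apply the integral transform to $|Z|$ directly, writing $p_i^{\rm selective}=g(|Z|)$ with $g$ the conditional survival function of $|Z|$, which is uniform because $|Z|$ has a continuous law on a truncation set of positive measure.
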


\begin{proof}
The proof is deferred to Appendix \ref{appendix:valid_p_proof}.
\end{proof}

Lemma \ref{lemma:valid_selective_p} indicates that, by using the selective $p$-value, the false positive rate (FPR) is theoretically controlled for any pre-specified level of guarantee $\alpha \in [0, 1]$.


\subsection{Conditional Data Space Characterization} 
To compute the selective $p$-value, we need to identify the conditional data space $\cY$ that corresponds to the conditioning event in \eq{eq:cE}, i.e.
\begin{align} \label{eq:conditional_data_space}
	\cY = 
	\big \{ 
		\bm Y \in \RR^n
		\mid 
		\cO(\bm Y) = \cO^{\rm obs},
		\cQ(\bm Y) = \cQ^{\rm obs}
	\big \}
\end{align}
In the following lemma, we show that the conditional data space $\cY$ is, in fact,  restricted to a \emph{line} in $\RR^n$.

\begin{lemma} \label{lemma:data_line}
The subspace $\cY$ in \eq{eq:conditional_data_space} is restricted to a line parametrized by a scalar parameter $z \in \RR$:
\begin{align} \label{eq:parametrized_response_vector}
	\cY = \big \{ 
		\bm Y (z) = \bm a + \bm b z \mid z \in \cZ
	\big \},
\end{align}
where $\bm a = \cQ^{\rm obs}$, $\bm b$ is defined in \eq{eq:nuisance}, and 
\begin{align}\label{eq:cZ}
	\cZ = \big \{ z \in \RR \mid \cO(\bm a + \bm b z) = \cO^{\rm obs} \big \} 
\end{align}

\end{lemma}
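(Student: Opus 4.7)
The plan is to exploit the two conditioning constraints separately: use the nuisance condition $\cQ(\bm Y) = \cQ^{\rm obs}$ to reduce $\bm Y$ to a one-dimensional affine subspace, and then use the selection condition $\cO(\bm Y) = \cO^{\rm obs}$ to cut out the subset $\cZ$ of that line. This is now a standard manoeuvre in the SI literature (following Lee et al.~2016), so the geometry is clean and most of the work is just bookkeeping with the definition of $\bm b$ in \eqref{eq:nuisance}.

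First I would unpack $\cQ(\bm Y) = \cQ^{\rm obs}$. By definition \eqref{eq:nuisance}, this reads $(I^n - \bm b \bm\eta_i^\top)\bm Y = \cQ^{\rm obs}$, which rearranges to $\bm Y = \cQ^{\rm obs} + \bm b\,(\bm\eta_i^\top \bm Y)$. Setting the scalar $z := \bm\eta_i^\top \bm Y$ and $\bm a := \cQ^{\rm obs}$ immediately gives the parametric form $\bm Y = \bm a + \bm b z$. The small sanity check I would include is that this parametrization is consistent: plugging $\bm Y(z) = \bm a + \bm b z$ back into $\cQ(\cdot)$ should return $\bm a$, which boils down to verifying $\bm \eta_i^\top \bm b = 1$. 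That identity is immediate from $\bm b = \Sigma \bm \eta_i (\bm \eta_i^\top \Sigma \bm \eta_i)^{-1}$, and it also shows that $\bm\eta_i^\top \bm Y(z) = z$, so the scalar $z$ is nothing but the value of the test statistic along the line.

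Next I would impose the second conditioning event. Any $\bm Y \in \cY$ must additionally satisfy $\cO(\bm Y) = \cO^{\rm obs}$. Since we have already reduced every admissible $\bm Y$ to the form $\bm a + \bm b z$, the remaining constraint is simply $\cO(\bm a + \bm b z) = \cO^{\rm obs}$, which is precisely the definition of $\cZ$ in \eqref{eq:cZ}. Combining the two steps gives the claimed characterization $\cY = \{\bm a + \bm b z : z \in \cZ\}$.

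The argument is largely algebraic, so the main obstacle is not the line-restriction itself but the characterization of $\cZ$, which is deferred to subsequent sections. In particular, $\cZ$ depends on the full randomized trajectory of RANSAC (which subset $s^{(b)}$ is drawn, which consensus set $\cI^{(b)}$ results, and which trial wins the $\argmax$ in \eqref{eq:optimal_consensus_set}), and this is precisely where the dynamic-programming approach advertised in the introduction will be needed. For the present lemma, however, I would simply note that $\cZ$ is well-defined as a subset of $\RR$ and stop there, since the line structure is all that is asserted.
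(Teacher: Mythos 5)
Your proposal is correct and follows essentially the same route as the paper's own proof: rearrange $\cQ(\bm Y)=\cQ^{\rm obs}$ into $\bm Y = \cQ^{\rm obs} + \bm b\,\bm\eta_i^\top\bm Y$, set $z=\bm\eta_i^\top\bm Y$ and $\bm a = \cQ^{\rm obs}$, then impose $\cO(\bm a+\bm b z)=\cO^{\rm obs}$ to obtain $\cZ$. The added sanity check $\bm\eta_i^\top\bm b = 1$ is a nice touch not present in the paper but changes nothing substantive.
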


\begin{proof}
The proof is deferred to Appendix \ref{appendix:data_line_proof}.
\end{proof}

\begin{remark} The fact that the conditional space can be reduced to a line was implicitly exploited in \cite{lee2016exact} and discussed in Sec. 6 of \cite{liu2018more}.
Lemma \ref{lemma:data_line} shows that it is not necessary to consider the $n$-dimensional space.
Instead, we only need to focus on the one-dimensional projected data space $\cZ$ in \eq{eq:cZ}.
\end{remark}

\paragraph{Reformulation of the selective \textit{p}-value computation with $\cZ$.}
Let us denote a random variable $Z \in \RR$ and its observation $Z^{\rm obs} \in \RR$ as follows:
\begin{align*}
	Z = \bm \eta_i^\top \bm Y
	~~ \text{and} ~~ 
	Z^{\rm obs} = \bm \eta_i^\top \bm Y^{\rm obs}.
\end{align*}
%
%
We have 
$
Z \mid Z \in \mathcal{Z} \sim {\rm TN} \big( \boldsymbol{\eta}_{i}^{\top}\boldsymbol{\mu}, \boldsymbol{\eta}^{\top}_{i}\Sigma\boldsymbol{\eta}_{i}, \mathcal{Z} \big)
$,
which is the \emph{truncated} normal (TN) distribution with mean $\boldsymbol{\eta}^{\top}_{i} \boldsymbol{\mu}$, variance $\boldsymbol{\eta}^{\top}_{i}\Sigma\boldsymbol{\eta}_{i}$, and truncation region $\mathcal{Z}$. 
Then, the selective \textit{p}-value in \eq{eq:selective_p} can be rewritten
as follows:
\begin{equation} \label{eq:selective_p_reformulated}
    p^{\rm selective}_{i} = \mathbb{P}_{{\rm H}_{0,i}} \Big( |Z| \geq |Z^{\rm obs}| \; \Big| \; Z \in \mathcal{Z} \Big). 
\end{equation}
The above equation shows that the remaining requirement to compute $p^{\rm selective}_{i}$ is to obtain the truncation region $\mathcal{Z}$. In the following section, we will propose a computationally efficient method for identifying $\mathcal{Z}$.

\begin{figure*} \label{DetectingAndTestingFig}
    \includegraphics[width=1\linewidth]{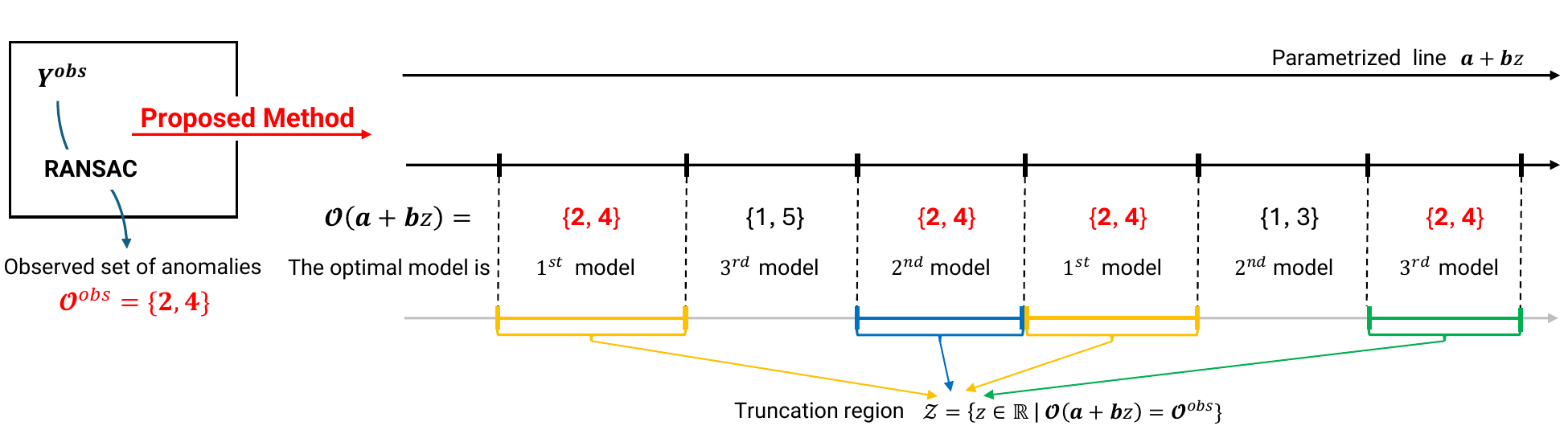} 
    \caption{A schematic illustration of the proposed method. By applying RANSAC to the observed data, we obtain a set of anomalies. Then, we parametrize the data with a scalar parameter $z$ in the direction of the test statistic to identify the truncation region $\cZ$ whose data have the \textit{same} result of AD as the observed data. Finally, the inference is conducted conditional on $\cZ$.
    We employ the ``divide-and-conquer'' strategy and introduce an efficient method for characterizing the truncation region $\cZ$.}
    \label{fig:schematic_illustration}
    \vspace{-5pt}
\end{figure*}

\subsection{Identification of Truncation Region $\cZ$}

Given the complexity of $\cZ$, there is no direct way to identify it.
To resolve this challenge, we utilize the \emph{``divide-and-conquer''} strategy and propose a method (illustrated in Fig. \ref{fig:schematic_illustration}) to efficiently identify $\cZ$:

$\bullet$ We divide the problem into multiple sub-problems

$\bullet$ We introduce a dynamic programming (DP) approach to efficiently solve each sub-problem

$\bullet$ We combine multiple sub-problems to obtain $\cZ$

\textbf{Divide-and-conquer strategy.}
The $\cZ$ in \eq{eq:cZ} can be decomposed into multiple sub-problems as follows:
\begin{align}
	\hspace{-2.5mm}
	\cZ &= \big \{ z \in \RR \mid \cO(\bm a + \bm b z) = \cO^{\rm obs} \big \} \nonumber \\
	&= \bigcup \limits_{ b \in [B]}
	\underbrace{\Bigg \{
	z \in \RR ~\Big |
	\begin{array}{l}
	$\text{the $b^{\rm th}$ model is optimal,}$ \\ 
	\cO^{(b)}(\bm a + \bm b z) = \cO^{\rm obs} 
	\end{array}
	\Bigg \}}
	_{\text{a sub-problem}}, \label{eq:decomposed_cZ}
\end{align}
where $\cO^{(b)}(\bm a + \bm b z)$ indicates the set of outliers detected at the $b^{\rm th}$ iteration, when applying RANSAC to $\bm Y (z) = \bm a + \bm b z$.

\paragraph{Solving of each sub-problem.}
For $b \in [B]$, we define the subset of one-dimensional projected dataset on a line for the sub-problem as:
\begin{align} \label{eq:sub_problem}
	\cZ^{(b)} = 
	\Bigg \{
	z \in \RR ~\Big |
	\begin{array}{l}
	\text{the $b^{\rm th}$ model is optimal,} \\ 
	\cO^{(b)}(\bm a + \bm b z) = \cO^{\rm obs} 
	\end{array}
	\Bigg \}.
\end{align}
In fact, the $\cZ^{(b)}$ in \eq{eq:sub_problem} is equivalent to:
\begin{align*} 
	\cZ^{(b)} = 
	\Bigg \{
	z \in \RR ~\Bigg |
	\begin{array}{l}
        \text{the $b^{\rm th}$ model is optimal,} \\
        \big| \cO^{(b)}(\bm a + \bm b z) \big| = \big| \cO^{\rm obs} \big|, \\ 
	\cO^{(b)}(\bm a + \bm b z) = \cO^{\rm obs} 
	\end{array}
	\Bigg \}.
\end{align*}
Then, the $\cZ^{(b)}$ can be re-written as 
$
\cZ^{(b)} = \cZ^{(b)}_1 \cap \cZ^{(b)}_2,
$
%
\begin{equation*} 
\begin{aligned}
	\cZ^{(b)}_1 
	& =  
	\Bigg \{
	z \in \RR ~\Big |
	\begin{array}{l}
	\text{the $b^{\rm th}$ model is optimal,} \\ 
	\big| \cO^{(b)}(\bm a + \bm b z) \big| = \big| \cO^{\rm obs} \big| 
	\end{array}
	\Bigg \}, \\
	\cZ^{(b)}_2 
	& =  
	\{
	z \in \RR \mid
	\cO^{(b)}(\bm a + \bm b z) = \cO^{\rm obs}
	\}.
\end{aligned}
\end{equation*}
Since the explanation for identifying $\cZ^{(b)}_2$ is simpler than that of $\cZ^{(b)}_1$, we will present it first, followed by the identification of $\cZ^{(b)}_1$.

\begin{lemma}\label{lemma:cZ_b_2}
The set $\cZ^{(b)}_2$ can be computed by solving quadratic inequalities w.r.t. $z$ described as follows:
\begin{align*}
	\cZ^{(b)}_2 = 
	\Big ( 
	\cap_{i \not \in \cO^{\rm obs}} \cR^{(b)}_i
	\Big )
	\cap
	\Big ( 
	\cap_{i \in \cO^{\rm obs}} \overline{\cR}^{(b)}_i
	\Big ),
\end{align*}
where $\cR^{(b)}_i$ is the region of the parameter $z$ where the $i^{\rm th}$ data point, $i \in [n]$, is NOT an outlier (i.e., is an inlier) at the $b^{\rm th}$ iteration of RANSAC, defined as:
\begin{align} \label{eq:cR_b_i}
	\cR^{(b)}_i = 
	\Big \{ 
		z \in \RR \mid 
		\underbrace{\left(
                	\bm Y_i (z) - X_i^\top \hat{\bm \beta}^{(b)} (z)
	         \right)^2  
	         \leq 
                 \tau}_{\text{a quadratic inequality w.r.t }z}
	\Big \},
\end{align}
and $\overline{\cR}^{(b)}_i$ is the complement of  $\cR^{(b)}_i$.
In \eq{eq:cR_b_i},
$\bm Y_i (z) = (\bm a + \bm b z)_i$ and 
$\hat{\bm \beta}^{(b)} (z)$ is the least square estimate when solving \eq{eq:b_iter_model_fitting} on $\bm Y(z)$, which is a linear function of $z$.
\end{lemma}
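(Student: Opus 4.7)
The plan is to unpack the definition of the inlier/outlier sets at iteration $b$ and rewrite the event $\cO^{(b)}(\bm a + \bm b z) = \cO^{\rm obs}$ as a pointwise membership condition indexed by $i \in [n]$. Concretely, since $\cO^{(b)} = [n] \setminus \cI^{(b)}$ with $\cI^{(b)}$ defined through the threshold $\tau$ in \eq{eq:select_inliers}, the equality of two subsets of $[n]$ is equivalent to:
\begin{equation*}
\forall i \in [n]: \quad i \in \cO^{(b)}(\bm a + \bm b z) \Longleftrightarrow i \in \cO^{\rm obs}.
\end{equation*}
Splitting this universal quantifier into the two complementary cases $i \in \cO^{\rm obs}$ and $i \notin \cO^{\rm obs}$ gives two groups of constraints: the first forces the corresponding squared residual to exceed $\tau$ (i.e.\ $z \in \overline{\cR}^{(b)}_i$), while the second forces it to be at most $\tau$ (i.e.\ $z \in \cR^{(b)}_i$). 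Intersecting the per-index constraints yields exactly the claimed decomposition
\begin{equation*}
\cZ^{(b)}_2 = \Big(\cap_{i \notin \cO^{\rm obs}} \cR^{(b)}_i\Big) \cap \Big(\cap_{i \in \cO^{\rm obs}} \overline{\cR}^{(b)}_i\Big).
\end{equation*}

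It then remains to verify that each $\cR^{(b)}_i$ is the solution set of a quadratic inequality in $z$, which is what makes the characterization computationally useful. First I would note that $\bm Y(z) = \bm a + \bm b z$ is affine in $z$, hence $\bm Y_i(z) = a_i + b_i z$ is affine. Next, the least-squares fit on the subset $s^{(b)}$ has closed form
\begin{equation*}
\hat{\bm \beta}^{(b)}(z) = \big(X_{s^{(b)}}^\top X_{s^{(b)}}\big)^{-1} X_{s^{(b)}}^\top \bm Y_{s^{(b)}}(z),
\end{equation*}
which is affine in $\bm Y(z)$ and therefore affine in $z$. Consequently $X_i^\top \hat{\bm \beta}^{(b)}(z)$ is affine in $z$, so the residual $r_i(z) := \bm Y_i(z) - X_i^\top \hat{\bm \beta}^{(b)}(z)$ has the form $r_i(z) = \alpha_i + \gamma_i z$ for scalars $\alpha_i, \gamma_i$ depending only on $\bm a, \bm b, X, s^{(b)}$. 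Squaring and comparing with $\tau$ reduces $\cR^{(b)}_i$ to
\begin{equation*}
\cR^{(b)}_i = \big\{ z \in \RR \,\big|\, \gamma_i^2 z^2 + 2\alpha_i \gamma_i z + \alpha_i^2 - \tau \leq 0 \big\},
\end{equation*}
a univariate quadratic inequality whose solution set is either empty, a point, or a (possibly unbounded) closed interval. Its complement $\overline{\cR}^{(b)}_i$ is then a finite union of open intervals, all obtainable in closed form from the roots.

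No step is a genuine obstacle; the only care required is bookkeeping of the two membership cases and a clean statement that the least-squares operator on the fixed subset $s^{(b)}$ is $z$-linear. Putting these two parts together --- the set-theoretic decomposition and the affine-in-$z$ residual argument --- yields the statement of Lemma~\ref{lemma:cZ_b_2}.
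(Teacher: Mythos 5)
Your proposal is correct and follows essentially the same route as the paper: both arguments reduce the event $\cO^{(b)}(\bm a + \bm b z) = \cO^{\rm obs}$ to per-index membership constraints and then exploit the fact that $\hat{\bm \beta}^{(b)}(z)$ is affine in $z$ (the paper writes $X_i^\top \hat{\bm \beta}^{(b)}(z) = \bm \omega^\top \bm Y(z)$ with $\bm \omega = \big(X_i^\top (X_{s^{(b)}})^{+} I^n_{s^{(b)}}\big)^\top$ and expands a quadratic form in $\bm a + \bm b z$, which is the matrix version of your scalar residual $\alpha_i + \gamma_i z$). The only cosmetic difference is that you state the set-theoretic decomposition explicitly while the paper relegates it to the interpretation paragraph, and you use $(X_{s^{(b)}}^\top X_{s^{(b)}})^{-1}X_{s^{(b)}}^\top$ where the paper uses the pseudo-inverse.
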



\begin{proof}
The proof is deferred to Appendix \ref{app:proof_cZ_b_2}.
\end{proof}

%
\textbf{Interpretation of  Lemma \ref{lemma:cZ_b_2}.} When obtaining $\cO^{\rm obs}$, we have already known which data points have been detected as abnormal. 
Then, for any $i \not \in \cO^{\rm obs}$,  we determine the region of $z$ where the $i^{\rm th}$ data point is NOT an outlier at the $b^{\rm th}$ iteration of RANSAC. 
Similarly, for $i \in \cO^{\rm obs}$, we find the region of $z$ where the $i^{\rm th}$ data point is an outlier.
Finally, by intersecting all the identified regions, we obtain $\cZ^{(b)}_2$.

%

\begin{lemma} \label{lemma:cZ_b_1}
The set $\cZ^{(b)}_1$ can be re-written as follows:
\begin{align*}
	\cZ^{(b)}_1
    = 
    \Bigg( \bigcap_{u \in [B], u\neq b} \cS^{(u)}_{n, |\cO^{\rm obs}|}\Bigg) \bigcap \Bigg( \cS^{(b)}_{n,|\cO^{\rm obs}| - 1} \setminus\cS^{(b)}_{n, |\cO^{\rm obs}|} \Bigg)
\end{align*}
where $\cS^{(u)}_{n, |\cO^{\rm obs}|}$ is  defined as:
\begin{align} \label{eq:cS_ub_n}
	\cS^{(u)}_{n, |\cO^{\rm obs}|} = 
	\left \{ 
		z \in \RR :
		\big | \cO^{(u)}_n (z)
		\big |
		>
		\big | \cO^{\rm obs} \big|
	\right \},
\end{align}
$\cO^{(u)}_n (z)$ is the set of outliers whose indices are smaller than or equal to $n$, detected by the $u^{\rm th}$ model when applying RANSAC to the vector $\bm Y (z) = \bm a + \bm b z$.
\end{lemma}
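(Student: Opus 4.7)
The plan is to establish the claimed identity by translating each of the two defining conditions of $\cZ^{(b)}_1$—that the $b$-th model is optimal and that $|\cO^{(b)}(\bm a+\bm b z)|=|\cO^{\rm obs}|$—into set-theoretic operations on the level sets $\cS^{(u)}_{n,k}$, and then verifying that the two characterizations match by double inclusion.

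First, I would handle the cardinality conjunct. By the definition $\cS^{(u)}_{n,k}=\{z:|\cO^{(u)}_n(z)|>k\}$, membership $z\in\cS^{(b)}_{n,|\cO^{\rm obs}|-1}$ is equivalent to $|\cO^{(b)}(z)|\geq|\cO^{\rm obs}|$, and non-membership in $\cS^{(b)}_{n,|\cO^{\rm obs}|}$ is equivalent to $|\cO^{(b)}(z)|\leq|\cO^{\rm obs}|$. Because cardinalities are integer-valued, the set difference $\cS^{(b)}_{n,|\cO^{\rm obs}|-1}\setminus\cS^{(b)}_{n,|\cO^{\rm obs}|}$ therefore coincides exactly with $\{z:|\cO^{(b)}(z)|=|\cO^{\rm obs}|\}$. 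Next, I would handle the optimality conjunct: by the selection rule \eqref{eq:optimal_consensus_set}, the $b$-th model is optimal iff it minimizes $|\cO^{(u)}(z)|$ over $u\in[B]$, with ties broken in favor of the earliest index. Requiring $z\in\bigcap_{u\neq b}\cS^{(u)}_{n,|\cO^{\rm obs}|}$ forces $|\cO^{(u)}(z)|>|\cO^{\rm obs}|=|\cO^{(b)}(z)|$ for every $u\neq b$, so the $b$-th model is the strict minimizer and is automatically selected as optimal, with no tie-breaking invoked. Intersecting the two factors yields $\cZ^{(b)}_1$; conversely, any $z\in\cZ^{(b)}_1$ places $|\cO^{(b)}(z)|$ below the outlier count of every other model, which is precisely the strict-inequality requirement of the first factor.

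The main obstacle I anticipate is a subtle mismatch at tied configurations. The selection rule \eqref{eq:optimal_consensus_set} only demands $|\cO^{(u)}(z)|\geq|\cO^{(b)}(z)|$ for indices $u>b$ (with equality admissible because the first-encountered model wins), whereas the right-hand side enforces strict inequality uniformly over all $u\neq b$. The discrepancy consists of those $z$ where two or more trial models tie for the maximum consensus set; by Lemma \ref{lemma:cZ_b_2} such configurations correspond to equalities in the quadratic inequalities governing inlier membership, and therefore form a Lebesgue-null subset of $\RR$. Since the selective $p$-value \eqref{eq:selective_p_reformulated} is computed as an integral of the truncated normal density over $\cZ$, this null set is irrelevant, so the identity holds either strictly (away from ties) or up to measure zero in general, which is sufficient for downstream use.
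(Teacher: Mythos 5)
Your main argument coincides with the paper's proof: you translate the cardinality condition into $\cS^{(b)}_{n,|\cO^{\rm obs}|-1}\setminus\cS^{(b)}_{n,|\cO^{\rm obs}|}=\{z:|\cO^{(b)}_n(z)|=|\cO^{\rm obs}|\}$ using integrality, translate optimality into the intersection $\bigcap_{u\neq b}\cS^{(u)}_{n,|\cO^{\rm obs}|}$, and intersect the two; this is exactly the two-condition decomposition in the paper's appendix. The difference, and the genuine gap, is in how you dispose of ties. You claim that configurations where two or more models tie for the largest consensus set ``correspond to equalities in the quadratic inequalities governing inlier membership'' and hence form a Lebesgue-null set. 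That is not true. An equality in a quadratic inequality corresponds to a single data point sitting exactly on the threshold $\tau$, which is indeed null; but a tie between models $b$ and $u$ only requires $|\cO^{(b)}_n(z)|=|\cO^{(u)}_n(z)|$, i.e., equality of two integer-valued, piecewise-constant functions of $z$. Two models can detect (possibly different) outlier sets of the same cardinality over entire intervals of $z$, so the tie set is generically a finite union of intervals of positive measure. On such an interval with $u>b$ and $\cO^{(b)}_n(z)=\cO^{\rm obs}$, the tie-breaking rule selects model $b$, so $z$ belongs to the true event $\{\cO(\bm a+\bm b z)=\cO^{\rm obs}\}$, yet the strict-inequality formula excludes it from $\cZ^{(b)}_1$ and no other sub-problem $\cZ^{(u)}$ recovers it. The resulting truncation region would be strictly too small on a positive-measure set, which changes the truncated-normal normalization and hence the selective $p$-value; this cannot be dismissed as measure-zero.

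The paper avoids this by explicitly assuming (in the remark following the lemma) that exactly one model attains the smallest outlier set, and by giving the corrected formula for the tied case in a separate appendix: strict inequality $\cS^{(u)}_{n,|\cO^{\rm obs}|}$ only for $u<b$, and the weaker $\cS^{(u)}_{n,|\cO^{\rm obs}|-1}$ for $u>b$, reflecting that later models may tie with the selected one. If you drop the incorrect measure-zero paragraph and either invoke the uniqueness assumption or state the asymmetric tied-case formula, your proof is complete and matches the paper's.
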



\begin{proof}
The proof is deferred to Appendix \ref{app:proof_cZ_b_1}.
\end{proof}

\textbf{Interpretation of Lemma \ref{lemma:cZ_b_1}.} The $\cS^{(u)}_{n, |O^{\rm obs}|}$ in \eq{eq:cS_ub_n} is the region of $z$ where the outlier set detected by the $u^{\rm th}$ model is larger than the observed outlier set.
By identifying all such regions $\cS^{(u)}_{n, |O^{\rm obs}|}$ for any $u \in [B], u \neq b$, and taking their intersection, we obtain the region where the outlier set detected by every model except the $b^{\rm th}$ model is larger than the observed set. Further intersecting this region with $\Big( \cS^{(b)}_{n,|\cO^{\rm obs}| - 1} \setminus\cS^{(b)}_{n, |\cO^{\rm obs}|} \Big)$, which restricts the $b^{\rm th}$ model to identify an outlier set of the same size as the observed one, we obtain region $\cZ^{(b)}_1$. In this region, the $b^{\rm th}$ model is optimal and $\big| \cO^{(b)}(\bm a + \bm b z) \big| = \big| \cO^{\rm obs} \big|$.
%

\begin{remark}
Here, we assume that only one model among the $B$ models has the smallest set of outliers (i.e., the largest consensus set).
The extension of Lemma \ref{lemma:cZ_b_1} to the case where multiple models share the smallest outlier set is provided in Appendix \ref{app:extension_lemma_cZ_1}.
\end{remark}

To obtain $\cZ^{(b)}_1$, the remaining task is to efficiently compute $\cS^{(u)}_{n, |\cO^{\rm obs}|}$ in \eq{eq:cS_ub_n}.
In the next Lemma, we show that $\cS^{(u)}_{n, |\cO^{\rm obs}|}$ can be computed in a \emph{recursive} manner.

\begin{lemma} \label{lemma:recursive}
Let us define the recursive formula as:
\begin{align*}
\begin{footnotesize}
\cS^{(u)}_{j, k} =
\begin{cases}
    \emptyset & \text{if } j \leq k, \\
    \overline{\cR}^{(u)}_j \cup \cS^{(u)}_{j - 1, k} & \text{if } k = 0, \\
    \Big( \cR^{(u)}_j \cap \cS^{(u)}_{j - 1, k} \Big) \cup \Big( \overline{\cR}^{(u)}_j \cap \cS^{(u)}_{j - 1, k - 1} \Big) & \text{otherwise}.
    \end{cases}
 \end{footnotesize}
\end{align*}
Then, $\cS^{(u)}_{n, |\cO^{\rm obs}|}$ can be computed by setting $j = n$ and $k = |\cO^{\rm obs}|$ in the above recursive formula.
\end{lemma}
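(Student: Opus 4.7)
}
The plan is to generalize the definition of $\cS^{(u)}_{n,|\cO^{\rm obs}|}$ given in \eq{eq:cS_ub_n} to arbitrary indices $j \in \{0,1,\ldots,n\}$ and $k \in \{0,1,\ldots,n\}$ by setting
\begin{align*}
    \cS^{(u)}_{j,k} = \Big\{ z \in \RR : \big| \cO^{(u)}_j(z) \big| > k \Big\},
\end{align*}
where $\cO^{(u)}_j(z)$ denotes the set of outliers detected by the $u^{\rm th}$ model whose indices are at most $j$. Under this generalization, the object of interest is recovered by specialization at $(j,k) = (n, |\cO^{\rm obs}|)$, so it suffices to verify that the three-case recursion in the lemma statement holds for all valid $(j,k)$.

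First I would establish the two base cases. When $j \leq k$, the set $\cO^{(u)}_j(z)$ is contained in $\{1, \ldots, j\}$, so its cardinality is at most $j \leq k$, which cannot be strictly greater than $k$; hence $\cS^{(u)}_{j,k} = \emptyset$. When $k = 0$, the condition $|\cO^{(u)}_j(z)| > 0$ is equivalent to the existence of at least one outlier among the first $j$ indices, which happens precisely when either the $j^{\rm th}$ point is itself an outlier (i.e., $z \in \overline{\cR}^{(u)}_j$, using the region defined in \eq{eq:cR_b_i}) or some earlier index is an outlier (i.e., $z \in \cS^{(u)}_{j-1,0}$). Taking the union gives the second branch of the recursion.

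Next I would handle the generic case ($0 < k < j$) by a disjoint case split on the status of index $j$ under the $u^{\rm th}$ fitted model. By the definition of $\cR^{(u)}_j$ in \eq{eq:cR_b_i}, index $j$ is an inlier exactly when $z \in \cR^{(u)}_j$ and an outlier exactly when $z \in \overline{\cR}^{(u)}_j$. In the inlier case, $\cO^{(u)}_j(z) = \cO^{(u)}_{j-1}(z)$, so the condition $|\cO^{(u)}_j(z)| > k$ reduces to $z \in \cS^{(u)}_{j-1,k}$. In the outlier case, $|\cO^{(u)}_j(z)| = |\cO^{(u)}_{j-1}(z)| + 1$, so the condition rewrites as $|\cO^{(u)}_{j-1}(z)| > k - 1$, i.e., $z \in \cS^{(u)}_{j-1,k-1}$. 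Intersecting each case with the appropriate region and taking the union over the two cases yields the third branch of the recursion. An inductive argument on $j$ (with the base $j = 0$, where the set is vacuously empty) then confirms the recursion globally.

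The hard part is not really a mathematical obstacle but a bookkeeping one: I have to be careful that the region $\cR^{(u)}_j$ depends on $z$ only through the fitted estimate $\hat{\bm \beta}^{(u)}(z)$, which itself is determined solely by the subset $s^{(u)}$ (independent of $j$), so that the inlier/outlier status of index $j$ is indeed characterized cleanly by $\cR^{(u)}_j$. Once this is verified, the case analysis above is a straightforward disjoint decomposition, and setting $(j,k) = (n,|\cO^{\rm obs}|)$ yields the lemma.
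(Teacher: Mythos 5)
Your proposal is correct and follows essentially the same route as the paper's proof: the same generalized definition $\cS^{(u)}_{j,k}=\{z : |\cO^{(u)}_j(z)|>k\}$, the same two base cases, and the same disjoint case split on whether the $j^{\rm th}$ point is an inlier ($z\in\cR^{(u)}_j$, reduce to $\cS^{(u)}_{j-1,k}$) or an outlier ($z\in\overline{\cR}^{(u)}_j$, reduce to $\cS^{(u)}_{j-1,k-1}$). Your closing remark that $\cR^{(u)}_j$ cleanly characterizes the status of index $j$ because $\hat{\bm\beta}^{(u)}(z)$ depends only on the fixed subset $s^{(u)}$ is a point the paper leaves implicit, but otherwise the two arguments coincide.
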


\begin{proof}
The proof is deferred to Appendix \ref{app:proof_recursive}.
\end{proof}

%
Since $\cS^{(u)}_{n, |\cO^{\rm obs}|}$ can be obtained recursively, we employ the \emph{bottom-up dynamic programming} approach to efficiently compute $\cS^{(u)}_{n, |\cO^{\rm obs}|}$, which subsequently used to identify $\cZ_1^{(b)}$ in Lemma \ref{lemma:cZ_b_1}.

After obtaining $\cZ_2^{(b)}$ and $\cZ_1^{(b)}$ from Lemmas \ref{lemma:cZ_b_2} and \ref{lemma:cZ_b_1}, the region $\cZ^{(b)}$ in \eq{eq:sub_problem} corresponding to the sub-problem is identified as $\cZ^{(b)} = \cZ^{(b)}_1 \cap \cZ^{(b)}_2$. 
Finally, by taking the union of the computed $\cZ^{(b)}$ for all $b \in [B]$, we obtain $\cZ$ in \eq{eq:decomposed_cZ},  which is subsequently used to compute the proposed selective $p$-value in \eq{eq:selective_p_reformulated}.
The entire steps of the proposed CTRL-RANSAC method is summarized in Algorithm \ref{alg:ctrl_ransac}.
The complexity analysis of Algorithm \ref{alg:ctrl_ransac} is provided in Appendix \ref{app:complexity_analysis}.

\begin{algorithm}[!t]
\caption{\texttt{CTRL-RANSAC}}
\label{alg:ctrl_ransac}
\begin{footnotesize}
\textbf{Input:} $(X, \boldsymbol{Y}^{\rm obs})$
\begin{algorithmic}[1]
\vspace{2pt}
    \STATE $\mathcal{O}^{\rm obs} \gets$ Apply RANSAC to $(X, \boldsymbol{Y}^{\rm obs})$
    \vspace{2pt}
    \FOR{$i \in \mathcal{O}^{\rm obs}$}
    \vspace{2pt}
        \STATE Compute $\bm \eta_{i} \gets$ Eq. (\ref{eq:test_direction}), $\boldsymbol{a} \text{ and } \boldsymbol{b} \gets$ Eq. (\ref{eq:parametrized_response_vector})
        \vspace{2pt}
        \STATE $\mathcal{Z} \gets$ {\tt identify\_truncation\_region} ($\boldsymbol{a}, \boldsymbol{b}, \mathcal{O}^{\rm obs}$)
        \vspace{2pt}
        \STATE Compute $p^{\rm selective}_{i} \gets$ Eq. (\ref{eq:selective_p_reformulated}) with $\cZ$
        \vspace{2pt}
    \ENDFOR
\end{algorithmic}
\textbf{Output:} $\{p^{\rm selective}_{i}\}_{i \in \mathcal{O}^{\rm obs}}$
\end{footnotesize}
\end{algorithm}

\begin{algorithm}[!t]
\renewcommand{\algorithmicrequire}{\textbf{Input:}}
\renewcommand{\algorithmicensure}{\textbf{Output:}}
\begin{footnotesize}
\begin{algorithmic}[1]
\REQUIRE $\bm a, \bm b, \cO^{\rm obs}$
\vspace{2pt}
\STATE $\bm Y(z) \gets \bm a + \bm b z$
\vspace{2pt}
\FOR {$b \in [B]$}
\vspace{2pt}
\STATE $\cZ_2^{(b)} \gets $ Lemma \ref{lemma:cZ_b_2}
\vspace{2pt}
\STATE $\cZ_1^{(b)} \gets $ Lemma \ref{lemma:cZ_b_1}
\vspace{2pt}
\STATE $\cZ^{(b)} = \cZ^{(b)}_1 \cap \cZ^{(b)}_2$ // Eq. \eq{eq:sub_problem}
\vspace{2pt}
\ENDFOR
\vspace{2pt}
\STATE $\cZ \gets \cup_{ b \in [B]} \cZ^{(b)}$ // Eq. \eq{eq:decomposed_cZ}
\vspace{2pt}
\ENSURE $\cZ$ 
\end{algorithmic}
\end{footnotesize}
\caption{{\tt identify\_truncation\_region}}
\label{alg:identify_truncation_region}
\end{algorithm}

\section{Experiment} \label{sec:experment}

In this section, we demonstrate the performance of the proposed CTRL-RANSAC.
Here, we present the main results. Several additional experiments can be found in Appendix \ref{app:additional_experiment}.
We considered the following methods:


$\bullet$ \texttt{CTRL-RANSAC}: proposed method

$\bullet$ \texttt{Line Search}: we initially proposed this method for RANSAC-based AD, drawing on the ideas of \citet{le2021parametric} and \citet{le2024cad}, with further details provided in Appendix \ref{app:line_search_method}. Later, its drawback of highly computational cost was addressed by the {\tt CTRL-RANSAC} method introduced in this paper.

$\bullet$ \texttt{OC}: the extension of \citet{lee2016exact} to our setting

$\bullet$ \texttt{Bonferroni}:  the most popular multiple testing

$\bullet$ \texttt{Naive}: traditional statistical inference

$\bullet$ \texttt{No Inference}: RANSAC without inference

If a method fails to control the FPR at  $\alpha$, it is \textit{invalid}, and its TPR becomes irrelevant. We set $\alpha = 0.05$. 
We remind that a method has high TPR indicates that it has low FNR.
We executed the code on AMD Ryzen 5 6600H with Radeon Graphics 3.30 GHz.

\subsection{Numerical Experiments}

\begin{figure}[!t]
    \centering
    \begin{subfigure}[t]{0.24\textwidth}
        \centering
        \includegraphics[width=\textwidth]{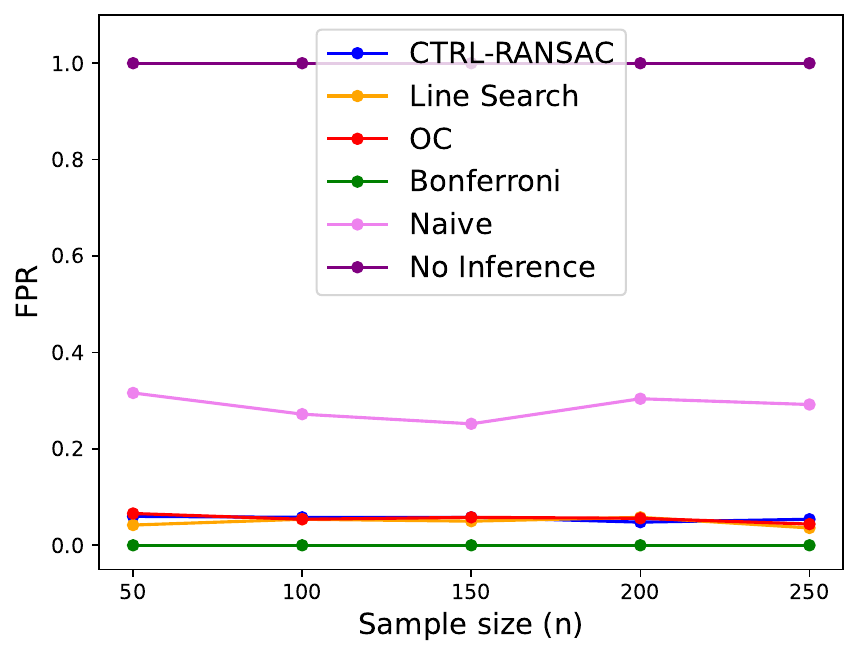}
        \caption{Independence}
    \end{subfigure}
    \hspace{-2mm}
    \begin{subfigure}[t]{0.24\textwidth}
        \centering
        \includegraphics[width=\textwidth]{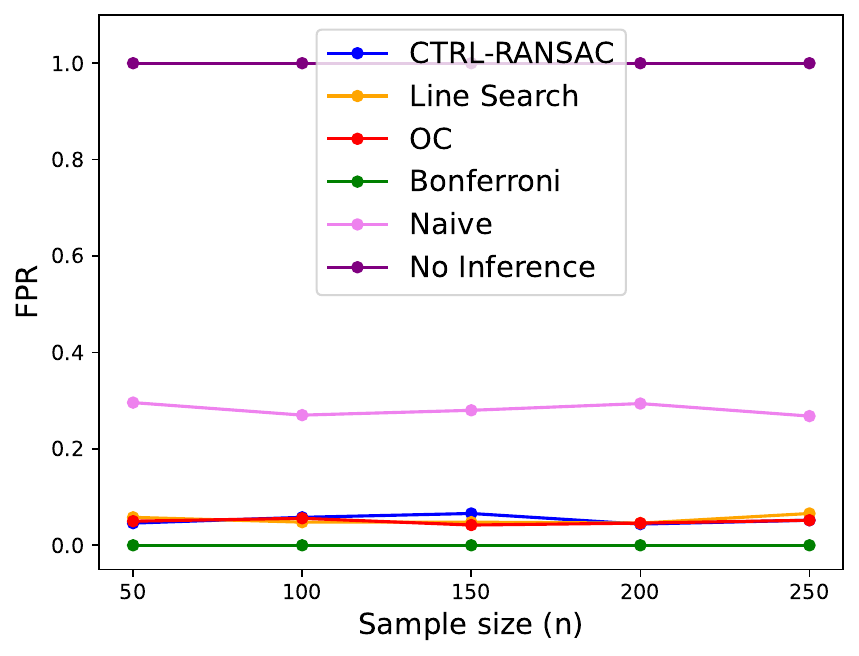}
        \caption{Correlation}
    \end{subfigure}
    \caption{FPR Comparison}
    \label{fig:fpr_n_change}
    \vspace{-5pt}
\end{figure}

\begin{figure}[!t]
    \centering
    \begin{subfigure}[t]{0.24\textwidth}
        \centering
        \includegraphics[width=\textwidth]{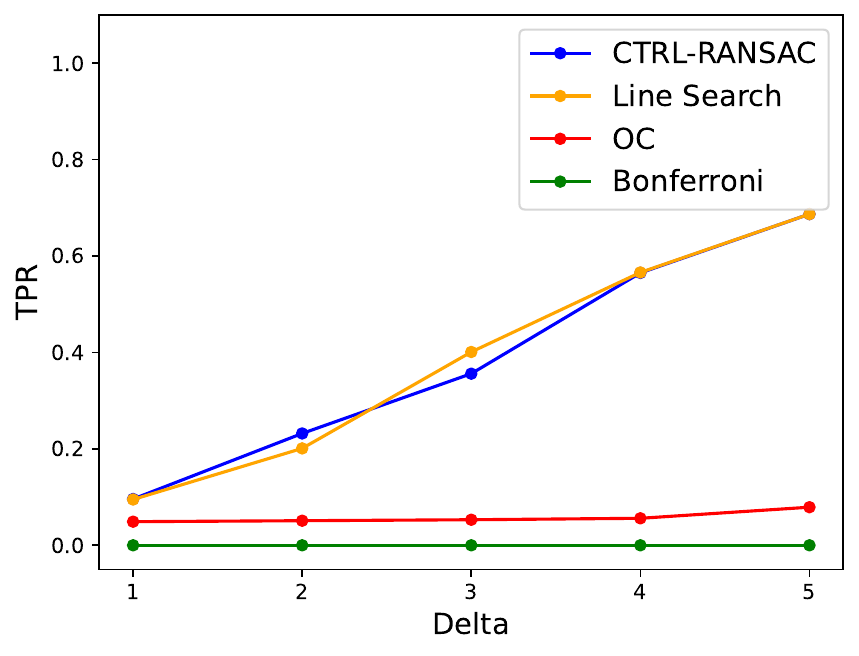}
        \caption{Independence}
    \end{subfigure}
    \hspace{-2mm}
    \begin{subfigure}[t]{0.24\textwidth}
        \centering
        \includegraphics[width=\textwidth]{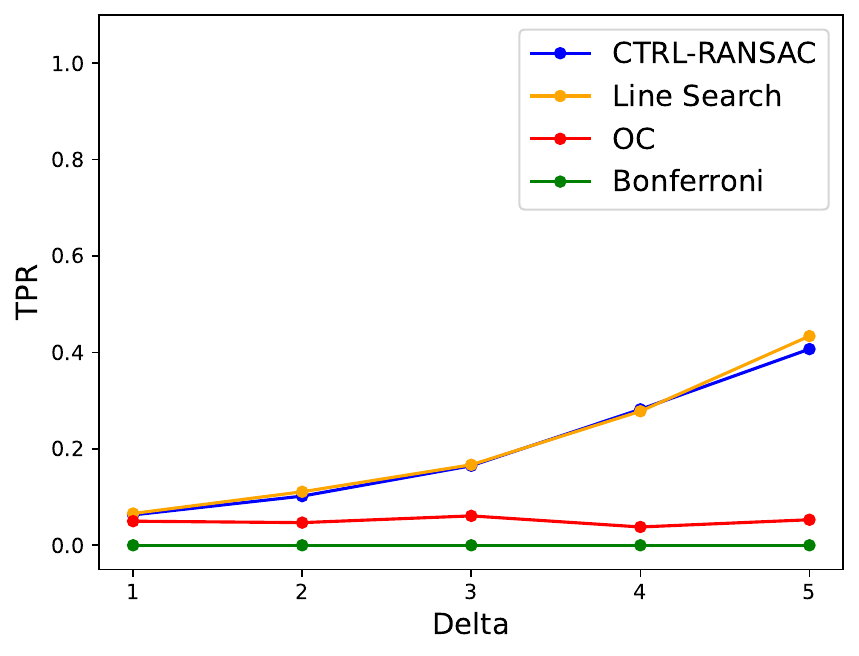}
        \caption{Correlation}
    \end{subfigure}
    \caption{TPR Comparison}
    \label{fig:tpr_delta_change}
    \vspace{-5pt}
\end{figure}

\begin{figure}[!t]
    \centering
    \begin{subfigure}[t]{0.24\textwidth}
        \centering
        \includegraphics[width=\textwidth]{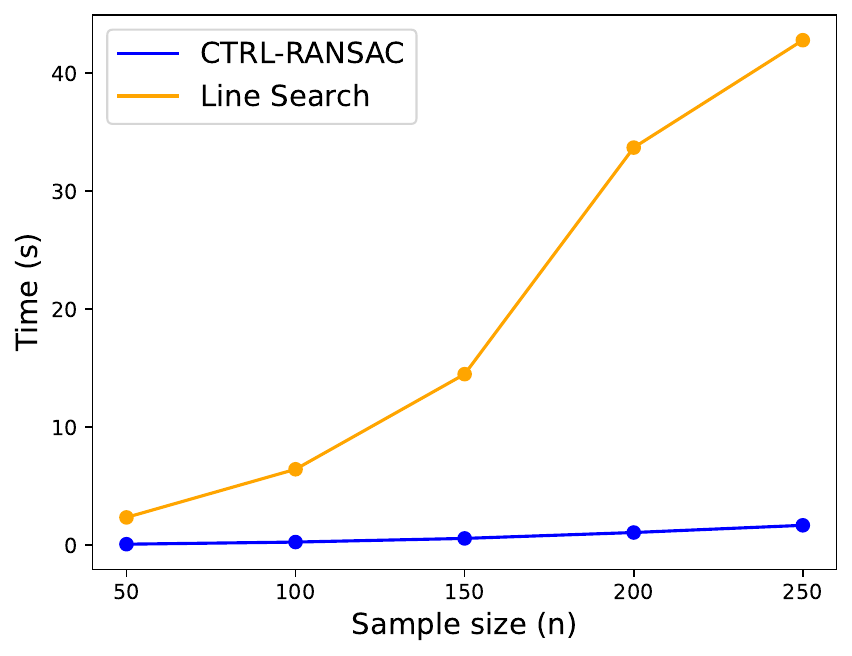}
        \caption{$n$ changes}
    \end{subfigure}
    \hspace{-2mm}
    \begin{subfigure}[t]{0.24\textwidth}
        \centering
        \includegraphics[width=\textwidth]{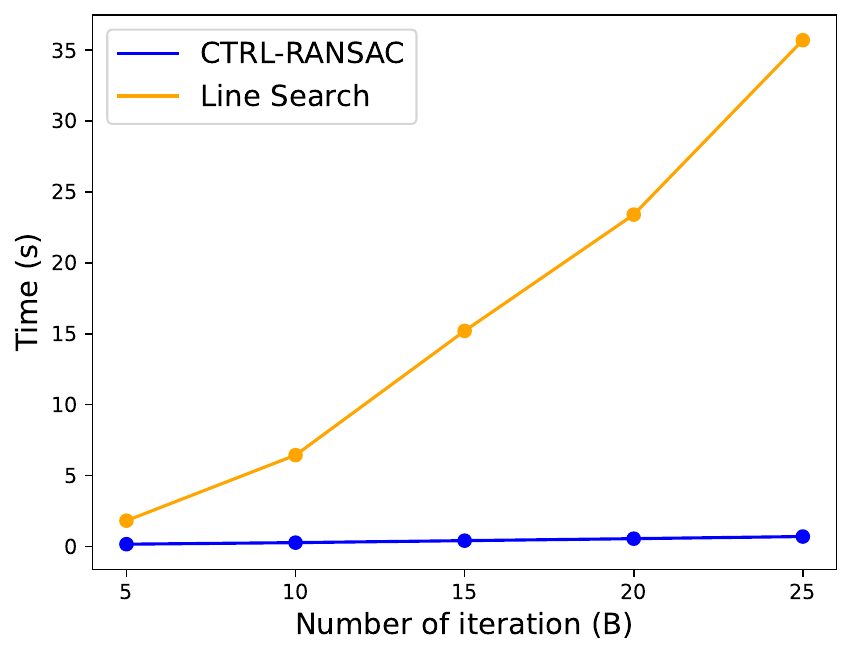}
        \caption{$B$ changes}
    \end{subfigure}
    \caption{Average running time in 100 trials}
    \label{fig:computational_time}
    \vspace{-5pt}
\end{figure}

We generated the dataset $( X, \bm Y)$ using $\bm Y = X \bm \beta^\ast + \bm \veps$, where $X_{i} \sim \NN(\boldsymbol{0}^{p}, I^{p})$ with $\boldsymbol{0}^{p} \in \RR^p$ is a vector of zeros and $I^{p} \in \RR^{p \times p}$ is the identity matrix. The  $\bm \beta^\ast = (1, 2, 1, 2, ...)^\top \in \RR^p$ and $\bm \veps \sim N(\bm 0^n, \Sigma)$.
We considered two covariance matrices: $\Sigma = I^n$ (Independence) and 
$
\Sigma = \left [0.5^{|i - j|} \right ]_{ij}, \forall i,j \in [n]
$, (Correlation).
We set $p = 5$, $B = 15$ and $\tau = 2$.
For the FPR experiments, we set $n \in \{ 50, 100, 150, 200, 250 \}$.
For the TPR experiments, $\lfloor \frac{n}{5} \rfloor$ instances were randomly made to be anomalies by setting $\bm Y_{i} = \bm Y_{i} + \Delta$ where $\Delta \in \{ 1, 2, 3, 4, 5\}$. 
Each experiment was repeated 1000 times.
Additional experiments for various values of $B,  p$, and $\tau$ are provided in Appendix \ref{app:additional_experiment}.

\textbf{The results of FPRs and TPRs}.
The results of the FPR control are shown in Fig. \ref{fig:fpr_n_change}. 
The \texttt{CTRL-RANSAC}, {\tt Line Search, OC} and \texttt{Bonferroni} effectively controlled the FPR under $\alpha$, whereas the \texttt{Naive} and \texttt{No Inference} failed to do so.
Because \texttt{Naive} and \texttt{No Inference} could not control the FPR, we no longer considered their TPRs.
The TPR results are shown in Fig. \ref{fig:tpr_delta_change}.
The {\tt CTRL-RANSAC} and {\tt Line Search} has the highest TPR compared to other methods in all the cases, i.e., the {\tt CTRL-RANSAC} and {\tt Line Search} has the lowest FNR.
Although {\tt CTRL-RANSAC} and {\tt Line Search} exhibit high TPR, their computational costs differ significantly and will be discussed in the next paragraph.

\textbf{Computational time.} The results on computational time are shown in Fig. \ref{fig:computational_time}.
Our initial method, \texttt{Line Search}, developed for RANSAC-based AD, incurs high computational costs. This is because it employs the parametric line search approach from \citet{le2021parametric} and \citet{le2024cad}, which requires exhaustive searches across the data space. To address this, we later developed \texttt{CTRL-RANSAC}, which significantly reduces the computational burden. Further details on complexity analyses of these two methods are provided in Appendix \ref{app:complexity_analysis}.

\begin{table}[!t]
    \renewcommand{\arraystretch}{1.2}
    \centering
        \caption{Comparison on the \textit{Brownlee} dataset}.
    \label{tab:BrownleeDatasetResult}
    \begin{tabular}{|c|c|c|}
        \hline
         & ~ FPR ~ & ~ TPR ~ \\ \hline
         \texttt{CTRL-RANSAC} & \textbf{0.018} & \textbf{0.659} \\ \hline
         \texttt{Linear Search} & \textbf{0.018} & \textbf{0.659} \\ \hline
         \texttt{OC} & 0.04 & 0.041 \\ \hline
         \texttt{Naive} & 0.159 & \diagbox[width=1.6cm,height=0.55cm]{}{} \\ \hline
         \texttt{Bonferroni} & 0.0 & 0.365 \\ \hline
         \texttt{No inference} & 1.0 & \diagbox[width=1.6cm,height=0.55cm]{}{} \\ \hline
    \end{tabular}
        \vspace{-5pt}
\end{table}

\begin{table}[!t]
    \renewcommand{\arraystretch}{1.2}
    \centering
    \caption{Comparison on the \textit{Hill Races} dataset}.
    \label{tab:HillDatasetResult}
    \begin{tabular}{|c|c|c|}
        \hline
         & ~ FPR ~ & ~ TPR ~ \\ \hline
         \texttt{CTRL-RANSAC} & \textbf{0.027} & \textbf{0.775} \\ \hline
         \texttt{Line Search} & \textbf{0.027} & \textbf{0.775} \\ \hline
         \texttt{OC} & 0.048 & 0.091 \\ \hline
         \texttt{Naive} & 0.299 & \diagbox[width=1.6cm,height=0.55cm]{}{} \\ \hline
         \texttt{Bonferroni} & 0.0 & 0.658 \\ \hline
         \texttt{No inference} & 1.0 & \diagbox[width=1.6cm,height=0.55cm]{}{} \\ \hline
    \end{tabular}
        \vspace{-5pt}
\end{table}

\subsection{Real-data Experiments}

We performed the comparison using three real-world datasets: the Brownlee’s Stack Loss Plant Data (referred to as Brownlee), the Hill Races Data, and the Motor Trend Car Road Tests (referred to as Mtcars).
The first two datasets are previously used in the context of AD \citep{andrews1974robust, hoeting1996method}.
The last one is widely used in {\tt R}.
In the first two datasets, it is known that certain instances are considered outliers, making them suitable for evaluating FPR and TPR.
The experiment was repeated 1000 times.
The results are shown in Tabs. \ref{tab:BrownleeDatasetResult} and \ref{tab:HillDatasetResult}.
The  {\tt CTRL-RANSAC} and {\tt Line Search} had the highest TPR while properly controlled the FPR under $\alpha = 0.05$.
We note that the computational time of the {\tt Line Search} is significantly higher than that of  the {\tt CTRL-RANSAC}.
Additionally, we compared the $p$-values of the \texttt{CTRL-RANSAC} and \texttt{OC} on the three datasets. We excluded \texttt{Line Search} because it exhibited the same FPR and TPR as \texttt{CTRL-RANSAC}.
The boxplots of the distribution of the $p$-values are illustrated in Figs. \ref{fig:boxplot_p_value_1} and \ref{fig:boxplot_p_value_2}.
The $p$-values of the \texttt{CTRL-RANSAC} tend to be smaller than those of \texttt{OC}, which indicates that the \texttt{CTRL-RANSAC} method has higher power than the \texttt{OC}.

\begin{figure}[!t]
    \centering
    \begin{subfigure}[t]{0.24\textwidth}
        \centering
        \includegraphics[width=\textwidth]{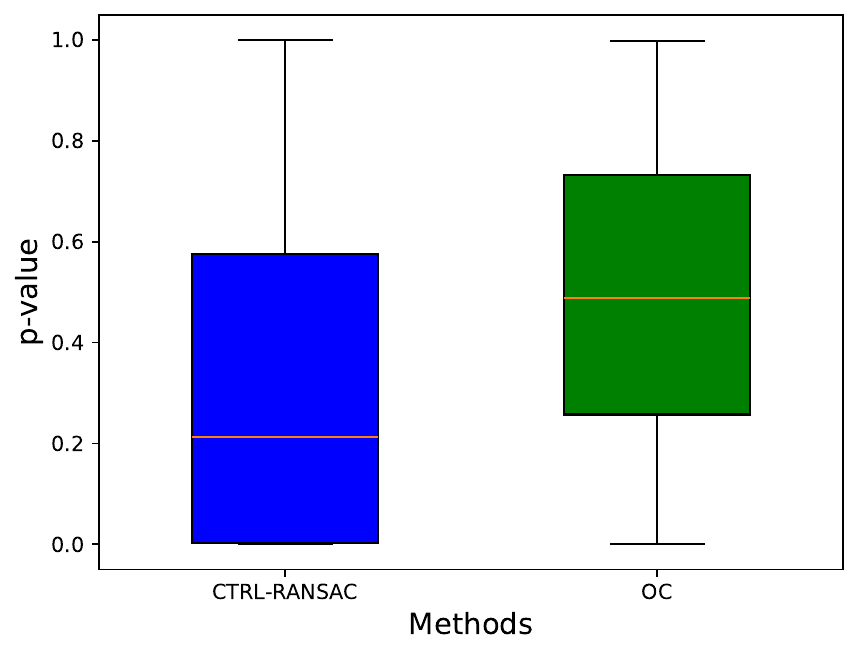}
        \caption{Brownlee dataset}
    \end{subfigure}
    \hspace{-2mm}
    \begin{subfigure}[t]{0.24\textwidth}
        \centering
        \includegraphics[width=\textwidth]{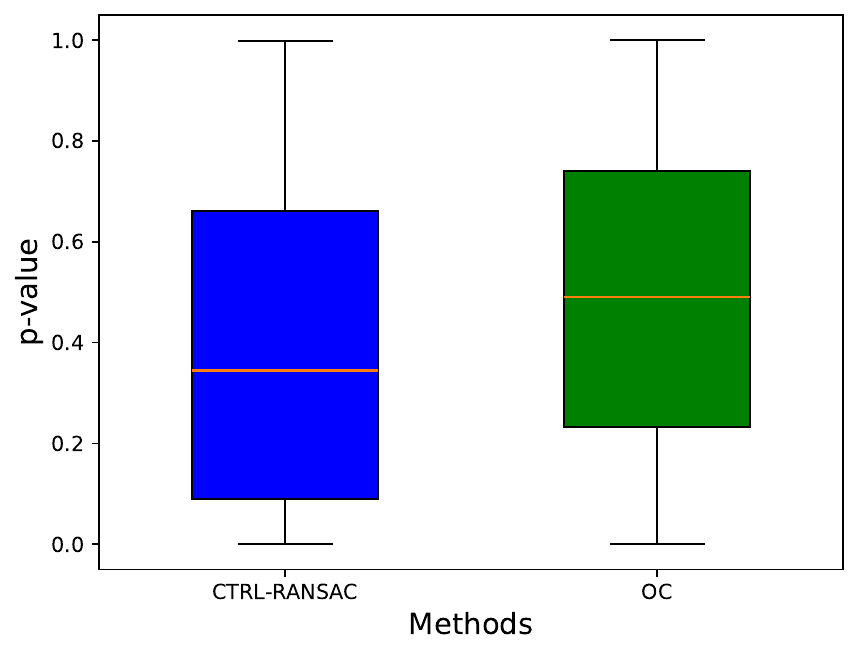}
        \caption{Hill Races dataset}
    \end{subfigure}
    \caption{Boxplots of $p$-value}
    \label{fig:boxplot_p_value_1}
        \vspace{-2pt}
\end{figure}

\begin{figure}[!t]
    \centering
    \includegraphics[width=.62\linewidth]{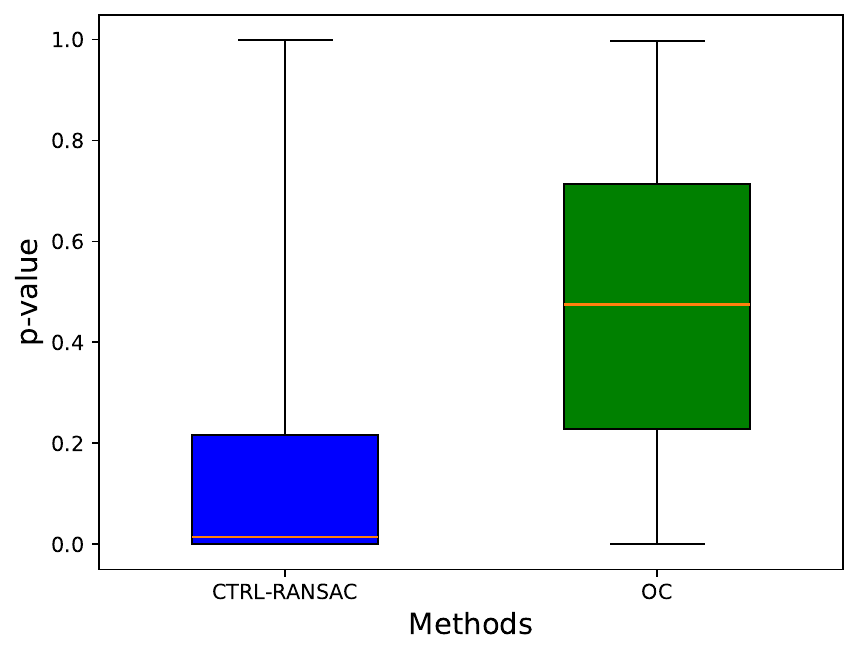}
        \vspace{-5pt}
    \caption{Boxplots of $p$-value on the Mtcars dataset}
    \label{fig:boxplot_p_value_2}
        \vspace{-5pt}
\end{figure}

\vspace{-5pt}
\section{Discussion} \label{sec:discussion}

\vspace{-5pt}
We propose a novel method to control the FPR under a given significance level $\alpha$ while maintaining a high TPR for statistical testing on anomalies detected by RANSAC. Our approach leverages the SI framework, incorporating the ``divide-and-conquer'' strategy and dynamic programming to efficiently compute the \textit{p}-value. This study represents a significant advancement toward reliable artificial intelligence. The current method does not account for selection events when calculating the threshold $\tau$ or estimating the variance of the noise distribution. Extending the proposed approach to handle these aspects within the SI framework would be a valuable future contribution.

\bibliographystyle{abbrvnat}
\bibliography{ref}

\newpage
\onecolumn

\section{Appendix}
\label{sec:appendix}

\subsection{Proof of Lemma \ref{lemma:valid_selective_p}} \label{appendix:valid_p_proof}


We have 
\begin{equation}
    \bm \eta_i^\top \bm Y 
    ~ \Big | 
    \left \{ \cO(\bm Y) = \cO^{\rm obs}, \cQ(\bm Y) = \cQ^{\rm obs} \right \} \sim {\rm TN} \big( \boldsymbol{\eta}_{i}^{\top}\boldsymbol{\mu}, \boldsymbol{\eta}^{\top}_{i}\Sigma\boldsymbol{\eta}_{i}, \mathcal{Z} \big),
\end{equation} 
where $\cZ$ is the truncation region described in \S \ref{sec:method}.
Thus, under the null hypothesis, the selective \textit{p}-value defined in \eq{eq:selective_p} follows the uniform distribution $\rm Unif(0, 1)$. Therefore,
\begin{equation}
    \bP_{{\rm H}_{0, i}} 
    \Big ( 
	p_i^{\rm selective} \leq \alpha
	~
	\Big | 
	~
        \cO(\bm Y) = \cO^{\rm obs},
        \cQ(\bm Y) = \cQ^{\rm obs} 
    \Big ) 
    = 
    \alpha,
    \qquad
    \forall \alpha \in [0, 1].
\end{equation}
By integrating over $\cQ (\bm Y)$, we have
\begin{align*}
    & \bP_{{\rm H}_{0, i}} 
    \Big ( 
	p_i^{\rm selective} \leq \alpha
	~
	\Big | 
	~
        \cO(\bm Y) = \cO^{\rm obs} 
    \Big ) \\
    & = \int
    \bP_{{\rm H}_{0, i}} 
    \Big ( 
	p_i^{\rm selective} \leq \alpha
	~
	\Big | 
	~
        \cO(\bm Y) = \cO^{\rm obs},
        \cQ(\bm Y) = \cQ^{\rm obs} 
    \Big )
    \bP_{{\rm H}_{0, i}} 
    \Big ( 
	\cQ(\bm Y) = \cQ^{\rm obs}
	~
	\Big | 
	~
        \cO(\bm Y) = \cO^{\rm obs} 
    \Big ) d\cQ^{\rm obs} \\
    & = \int 
    \alpha 
    \bP_{{\rm H}_{0, i}} 
    \Big ( 
	\cQ(\bm Y) = \cQ^{\rm obs}
	~
	\Big | 
	~
        \cO(\bm Y) = \cO^{\rm obs} 
    \Big ) d\cQ^{\rm obs} \\
    & = \alpha
    \int 
    \bP_{{\rm H}_{0, i}} 
    \Big ( 
	\cQ(\bm Y) = \cQ^{\rm obs}
	~
	\Big | 
	~
        \cO(\bm Y) = \cO^{\rm obs} 
    \Big ) d\cQ^{\rm obs} \\
    & = \alpha.
\end{align*}
Finally, we obtain Lemma 1 as follows:
\begin{align*}
    & \bP_{{\rm H}_{0, i}} 
    \Big ( 
	p_i^{\rm selective} \leq \alpha
    \Big ) \\
    & = \sum_{\cO^{\rm obs}}
    \bP_{{\rm H}_{0, i}} 
    \Big ( 
	p_i^{\rm selective} \leq \alpha
	~
	\Big | 
	~
        \cO(\bm Y) = \cO^{\rm obs}
    \Big )
    \bP_{{\rm H}_{0, i}} 
    \Big ( 
	\cO(\bm Y) = \cO^{\rm obs} 
    \Big ) \\
    & = \sum_{\cO^{\rm obs}} 
    \alpha 
    \bP_{{\rm H}_{0, i}} 
    \Big ( 
	\cO(\bm Y) = \cO^{\rm obs} 
    \Big ) \\
    & = \alpha
    \sum_{\cO^{\rm obs}} 
    \bP_{{\rm H}_{0, i}} 
    \Big ( 
        \cO(\bm Y) = \cO^{\rm obs} 
    \Big )  \\
    & = \alpha.
\end{align*}


\subsection{Proof of Lemma \ref{lemma:data_line}} \label{appendix:data_line_proof}

According to the second condition in (\ref{eq:conditional_data_space}), we have 
\begin{align*}
    \cQ (\bm Y) &= \cQ ^{\rm obs} \\
    \Leftrightarrow (I^n - \bm b \bm \eta_i^\top ) \bm Y &= \cQ^{\rm obs} \\
    \Leftrightarrow \bm Y &= \cQ^{\rm obs} + \bm b \bm \eta_i^\top \bm Y.
\end{align*}
By defining $\bm a = \cQ ^{\rm obs}$, $z = \bm \eta_i^\top \bm Y$, and incorporating the first condition of (\ref{eq:conditional_data_space}), we obtain Lemma \ref{lemma:data_line}.


\subsection{Proof of Lemma \ref{lemma:cZ_b_2}} \label{app:proof_cZ_b_2}

The $\cR^{(b)}_i$ in \eq{eq:cR_b_i} denotes the region of $z$ where $\left( \bm Y_i (z) -X_i^\top \hat{\bm \beta}^{(b)} (z) \right)^2 \leq \tau$. In other words, it represents the region where the deviation of $i^{\rm th}$ data point satisfies the criteria to be considered as an inlier when applying RANSAC with $\bm Y (z) = \bm a + \bm b z$. We have
\begin{align*}
    \left( \bm Y_i (z) -X_i^\top \hat{\bm \beta}^{(b)} (z) \right)^2 & \leq \tau \\
    \Leftrightarrow 
    \; \bm Y_i^2 (z) - 2 \bm Y_i (z) X_i^\top \hat{\bm \beta}^{(b)} (z) + \left( X_i^\top \hat{\bm \beta}^{(b)} (z) \right)^2 & \leq \tau . 
\end{align*}

For notational simplicity, we define $\bm \omega = \big( X_i^\top ( X_{s^{(b)}})^{+} I^n_{s^{(b)}} \big)^\top$. Then, we have 
\begin{align*}
    (\bm Y (z) )^\top \bm e_i \bm e_i^\top \bm Y(z) - 2 (\bm Y (z) )^\top \bm e_i \bm \omega^\top \bm Y (z) + (\bm Y (z) )^\top \bm \omega \bm \omega^\top \bm Y (z) &\leq \tau \\
    \Leftrightarrow \;
    (\bm Y (z) )^\top (\bm e_i \bm e_i^\top - 2 \bm e_i \bm \omega^\top + \bm \omega \bm \omega^\top) \bm Y (z) &\leq \tau \\
    \Leftrightarrow \; 
    (\bm a + \bm b z)^\top (\bm e_i \bm e_i^\top - 2 \bm e_i \bm \omega^\top + \bm \omega \bm \omega^\top) (\bm a + \bm b z) &\leq \tau.
\end{align*}
For notational simplicity, we define $\cH = \bm e_i \bm e_i^\top - 2 \bm e_i \bm \omega^\top + \bm \omega \bm \omega^\top$. Then we have
\begin{align*}
    \bm a ^\top \cH \bm a + a^\top \cH \bm b z + z^\top \bm b^\top \cH \bm a + z^\top \bm b ^\top \cH \bm b z &\leq \tau \\
    \Leftrightarrow \;
    \bm a ^\top \cH \bm a - \tau + \bm a ^\top (\cH +\cH ^\top)\bm b z + \bm b ^\top \cH \bm b z^2 &\leq 0.
\end{align*}
Therefore, $\cR _i^{(b)}$ can be computed in term of a quadratic inequality w.r.t. $z$ as:
\begin{align*}
	\cR^{(b)}_i = 
	\Big \{ 
		z \in \RR \mid 
		      \bm a ^\top \cH \bm a 
                - \tau 
                + \bm a ^\top (\cH +\cH ^\top)\bm b z 
                + \bm b ^\top \cH \bm b z^2 
                \leq 0
	\Big \}.
\end{align*}
Finally, we obtain Lemma \ref{lemma:cZ_b_2}.

\subsection{Proof of Lemma \ref{lemma:cZ_b_1}} \label{app:proof_cZ_b_1}


The $\cZ_1^{(b)}$ denotes the region of $z$ where the $b^{\rm th}$ model is optimal and $\big| \cO^{(b)} (z) \big| = \big| \cO^{\rm obs} \big|$ (we note that $\cO^{(b)} (z) = \cO^{(b)}_n (z)$). In other words, this region satisfies two conditions:

$\bullet$ \textbf{ Condition 1}: The outlier set of the $b^{\rm th}$ model has the same cardinality as the observed set of outliers, i.e., $\big| \cO^{(b)}_n (z) \big| = \big| \cO^{\rm obs} \big|$.

$\bullet$ \textbf{ Condition 2}: For all $u \in [B]$ where $u \neq b$, the outlier set for the $u^{\rm th}$ model has a cardinality strictly greater than $\big| \cO^{\rm obs} \big|$. That is, $\forall u \in [B], u \neq b: \big| \cO^{(u)}_n (z) \big| > \big| \cO^{\rm obs} \big|$. This implies that the $b^{\rm th}$ model is optimal, as it has the smallest set of outliers compared to other models.

The first condition can be written 
as follows: 
\begin{align*}
    \cS^{(b)}_{n,|\cO^{\rm obs}| - 1} \setminus\cS^{(b)}_{n, |\cO^{\rm obs}|} 
    & = \left\{ z \in \RR : \big | \cO^{(b)}_n (z) \big | > \big | \cO^{\rm obs} \big| - 1 \right\} \setminus \left\{ z \in \RR : \big | \cO^{(b)}_n (z) \big | > \big | \cO^{\rm obs} \big| \right\} 
    \\ 
    & = \left\{ z \in \RR : \big| \cO^{\rm obs} \big| \geq \big | \cO^{(b)}_n (z) \big | > \big | \cO^{\rm obs} \big| - 1 \right\} \\
    & = \left\{ z \in \RR : \big | \cO^{(b)}_n (z) \big | = \big | \cO^{\rm obs} \big| \right\}.
\end{align*}

Similarly, the second condition can be expressed 
as:
\begin{align*}
    \bigcap_{u \in [B], u\neq b} \cS^{(u)}_{n, |\cO^{\rm obs}|} 
    & = \bigcap_{u \in [B], u\neq b} \left\{ z \in \RR : \big | \cO^{(u)}_n (z) \big | > \big | \cO^{\rm obs} \big| \right\}
    \\ 
    & = \left\{ z \in \RR :  \big | \cO^{(u)}_n (z) \big | > \big | \cO^{\rm obs} \big|, \forall u \in [B], u \neq b \right\}.
\end{align*}

Finally, the region $\cZ^{(b)}_1$ is obtained by intersecting two regions corresponding to these conditions: \begin{equation*}
    \cZ^{(b)}_1 = \Bigg( \bigcap_{u \in [B], u\neq b} \cS^{(u)}_{n, |\cO^{\rm obs}|}\Bigg) \bigcap \Bigg( \cS^{(b)}_{n,|\cO^{\rm obs}| - 1} \setminus\cS^{(b)}_{n, |\cO^{\rm obs}|} \Bigg).
\end{equation*}

\subsection{Proof of Lemma \ref{lemma:recursive}} \label{app:proof_recursive}

The $\cS^{(u)}_{j, k}$, as defined in Lemma \ref{lemma:recursive}, represents the region of $z$ where, when applying RANSAC with $\bm Y (z) = \bm a + \bm b z$, the $u^{\rm th}$ model identifies MORE THAN $k$ outliers among the first $j$ data points. The recursive formula in Lemma \ref{lemma:recursive} defines $\cS^{(u)}_{j, k}$ under three distinct cases. It is important to note that the recurrence relation depends on two parameters: $j$ and $k$.

$\bullet \textbf{ Case } j \leq k$: Since it is impossible to detect more than $k$ outliers from the first $j$ data points when $j \leq k$, $\cS^{(u)}_{j, k} = \emptyset$. For instance, identifying two or more outliers is impossible if only one data point has been observed.

$\bullet \textbf{ Case } k = 0$: $\cS^{(u)}_{j, 0}$ represents the region of $z$ where at least one outlier is detected among the first $j$ data points. This can be expressed as the union of the regions for all individual data points: $\cS^{(u)}_{j, 0} = \cup_{l = 1}^{j} \overline{\cR}^{(u)}_l$. To apply the recursive formula, we express this as $\cS^{(u)}_{j, 0} = \overline{\cR}^{(u)}_j \cup \cS^{(u)}_{j - 1, 0}$, where $j \neq 0$ (as $j = 0$ is handled in the first case).

$\bullet$ \textbf{General case}: Let $G$ be the set of all possible outlier sets, each of length larger than $k$, when classifying the first $j$ data points by the $u^{\rm th}$ model. We divide it into two subsets:
\begin{equation*}
        G^- := \{ g \in G \mid j \notin g \}, \;\;
        G^+ := \{ g \in G \mid j \in g \}.
\end{equation*}
Here, $G^-$ is the set of all possible outlier sets that exclude $j^{\rm th}$ data point, indicating that this data point is an inlier in the $u^{\rm th}$ model. Conversely, $G^+$ is the set of all possible outlier sets that include $j^{\rm th}$ data point, indicating that this data point is an outlier in $i^{\rm th}$ model.

For $g \in G^-$, the $j^{\rm th}$ data point is an inlier, meaning that more than $k$ outliers must be within the first $j - 1$ data points. This leads to the condition $\cR^{(u)}_j \cap \cS^{(u)}_{j - 1, k} = \left \{ z \in \RR : \big( j \notin \cO^{(u)}_j (z) \big ) \wedge \big ( \big | \cO^{(u)}_{j-1} (z) \big | > k \big ) \right \}$.

For $g \in G^+$, the $j^{\rm th}$ data point is an outlier, requiring at least $k$ other outliers among the first $j - 1$ data points. This leads to the condition $\overline{\cR}^{(u)}_j \cap \cS^{(u)}_{j - 1, k - 1} = \left \{ z \in \RR : \big ( j \in \cO^{(u)}_j (z) \big ) \wedge \big ( \big | \cO^{(u)}_{j-1} (z) \big | > k-1 \big ) \right \}$.

By taking the union of these two regions, we obtain:
\begin{equation*}
    \cS^{(u)}_{j, k} = \left \{ z \in \RR : \big | \cO^{(u)}_j (z) \big | > k \right \} = \Big( \cR^{(u)}_j \cap \cS^{(u)}_{j - 1, k} \Big) \cup \Big( \overline{\cR}^{(u)}_j \cap \cS^{(u)}_{j - 1, k - 1} \Big).
\end{equation*}

\subsection{The extension of Lemma \ref{lemma:cZ_b_1} to the case where multiple models share the smallest outlier set} \label{app:extension_lemma_cZ_1}
As mentioned before, when multiple models have the same largest set of inliers i.e. smallest set of outliers, the first encountered model is selected. In other words, if the $b^{\rm th}$ model is deemed optimal, no model encountered before it has a smaller or equal set of outliers. However, models encountered after it may have the same minimal set of outliers. Therefore, $\cZ_1^{(b)}$ can be written as:
\begin{align*}
	\cZ^{(b)}_1
	= 
	\Bigg( \bigcap^{b-1}_{u=1} \cS^{(u)}_{n, |\cO^{\rm obs}|} \Bigg) 
        \bigcap \Bigg( \bigcap^{B}_{u=b+1} \cS^{(u)}_{n, |\cO^{\rm obs}|-1} \Bigg)
        \bigcap \Bigg(\cS^{(b)}_{n, |\cO^{\rm obs}| - 1} \setminus \cS^{(b)}_{n, |\cO^{\rm obs}|}\Bigg).
\end{align*}


\subsection{Analysis of the Complexity of Algorithm \ref{alg:ctrl_ransac}} \label{app:complexity_analysis}


\textbf{Complexity.} In Lemma \ref{lemma:cZ_b_2}, computing each $\cZ^{(b)}_2$ has a time complexity of $O(n)$, assuming that the computation of each $\cR^{(b)}_i$ is constant. Therefore, computing all $\cZ^{(b)}_2$ requires $O(B\times n)$ time. On the other hand, $\cZ^{(b)}_1$ is more computationally demanding. According to Lemma \ref{lemma:cZ_b_1}, $\cZ^{(b)}_1$ can only be computed once $\cS_{n, |\cO^{\rm obs}|}^{(u)}$ has been calculated for all $u \in [B]$. Since $\cS^{(u)}_{n, |\cO^{\rm obs}|}$ is computed recursively, as per Lemma \ref{lemma:recursive}, this requires $O(n^2)$ time. Therefore, computing each $\cZ^{(b)}_1$ takes $O(B \times n^2)$ time. Consequently, computing $\cZ^{(b)}_1$ for all $b\in[B]$ requires $O(B^2 \times n^2)$ time, making it the bottleneck of the Algorithm \ref{alg:identify_truncation_region}. In algorithm \ref{alg:ctrl_ransac}, the most time-consuming step turns out to be the identification of $\cZ$ which, as shown earlier, has a complexity of $O(B^2 \times n^2)$. Thus, the overall complexity of CTRL-RANSAC applying to each outlier is $O(B^2\times n^2)$.

\textbf{Computational trick.} In fact, we can calculate $\cR_i^{(u)}, \forall u \in [B], i\in [n]$ and $\cS_{j, k}^{(u)}, \forall u\in[B], j,k\in[n]$ in advance. This implies that they can be computed before the for loop in Algorithm \ref{alg:identify_truncation_region} and need to be computed only once. Employing this trick, we can reduce the complexity of computing each $\cZ_1^{b}$ to $O(n)$ instead of $O(B\times n^2)$. But then, the bottleneck of the Algorithm \ref{alg:identify_truncation_region} becomes the computation of $S_{n, |\cO^{\rm obs}|}^{(u)}$ for all $u\in[B]$, which has the time complexity of $O(B\times n^2)$. Thus, the overall complexity of CTRL-RANSAC applying to each outlier is reduced to $O(B\times n^2)$.

\subsection{Additional Experiments} \label{app:additional_experiment}
Here, we provide additional numerical experiments including more parameter changes and the robustness of CTRL-RANSAC.

\textbf{Parameter changes.} In these experiments, we set the default value of $n = 150$. For the TPR experiments, we set $\Delta = 3$ and $n\in \{ 50,100,150,200,250 \} $. Other parameters were set as follows:
\begin{itemize}
    \item $B \in \{ 5,10,15,20,25 \}$,
    \item $p \in \{ 1,3,5,7,9 \}$ and
    \item $\tau \in \{ 1, 1.5, 2, 2.5, 3 \}$.
\end{itemize}

The results of the FPRs and TPRs are shown in Fig. \ref{fig:fpr_additional} and \ref{fig:tpr_additional} respectively. The CTRL-RANSAC, Line Search, OC and Bonferroni still effectively control the FPR under $\alpha$. The CTRL-RANSAC and Line Search still have the same highest performance on TPR.

\begin{figure*}[!t]
    \centering
    \begin{minipage}{\textwidth}
        \centering
        
        \subfloat[]{\includegraphics[width=0.33\textwidth]{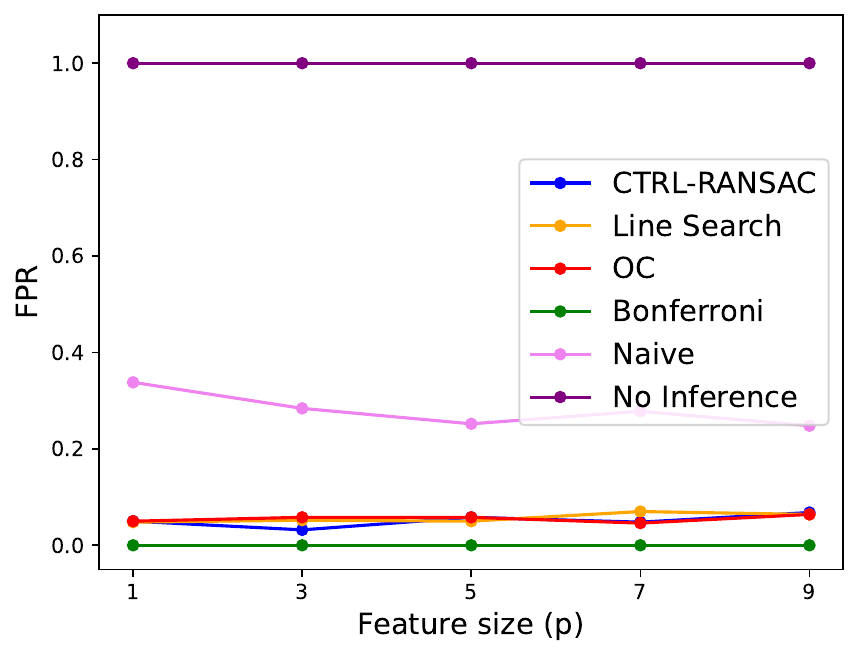}}\hfill
        \subfloat[]{\includegraphics[width=0.33\textwidth]{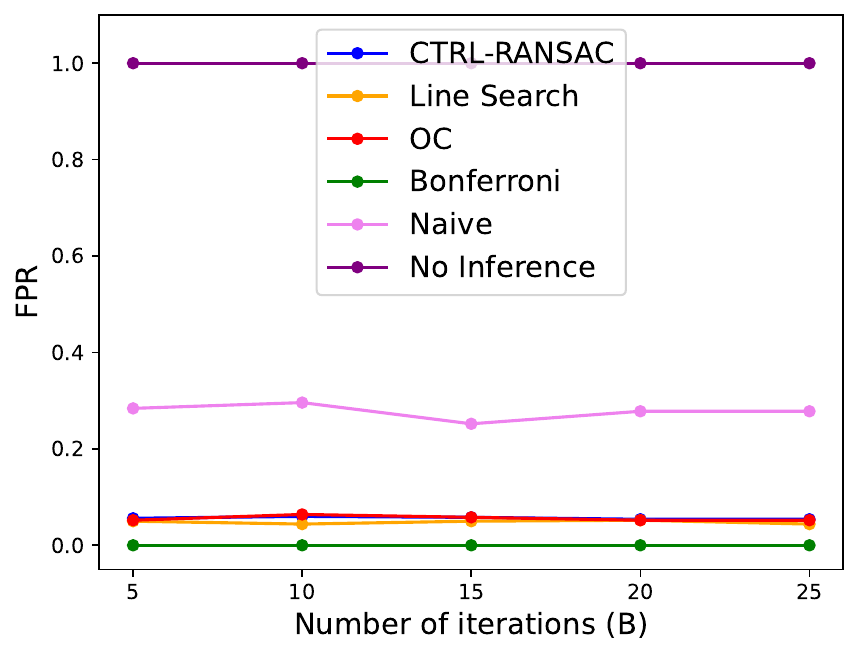}}\hfill
        \subfloat[]{\includegraphics[width=0.33\textwidth]{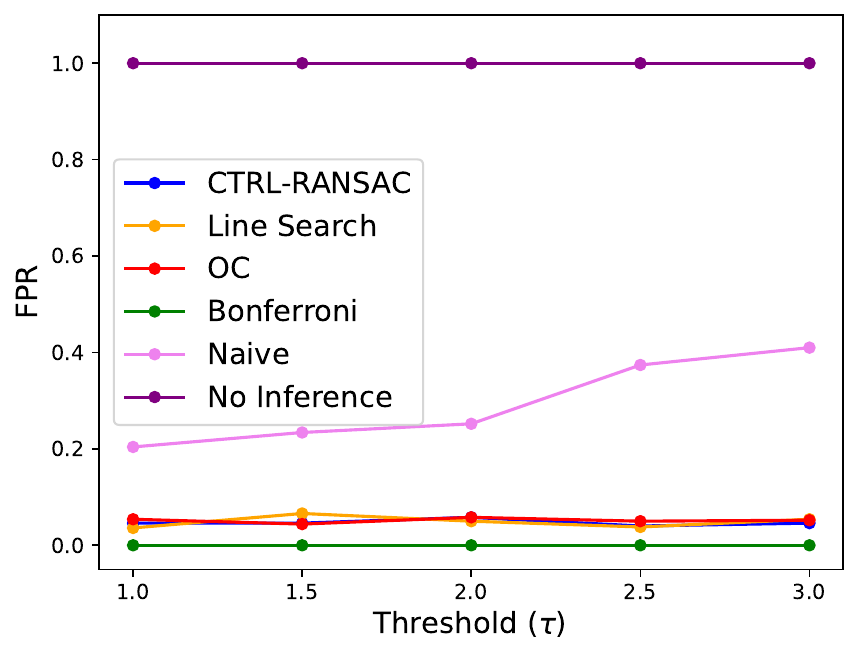}}\\
        (1) Independence \\[0.5em]
        
        \subfloat[]{\includegraphics[width=0.33\textwidth]{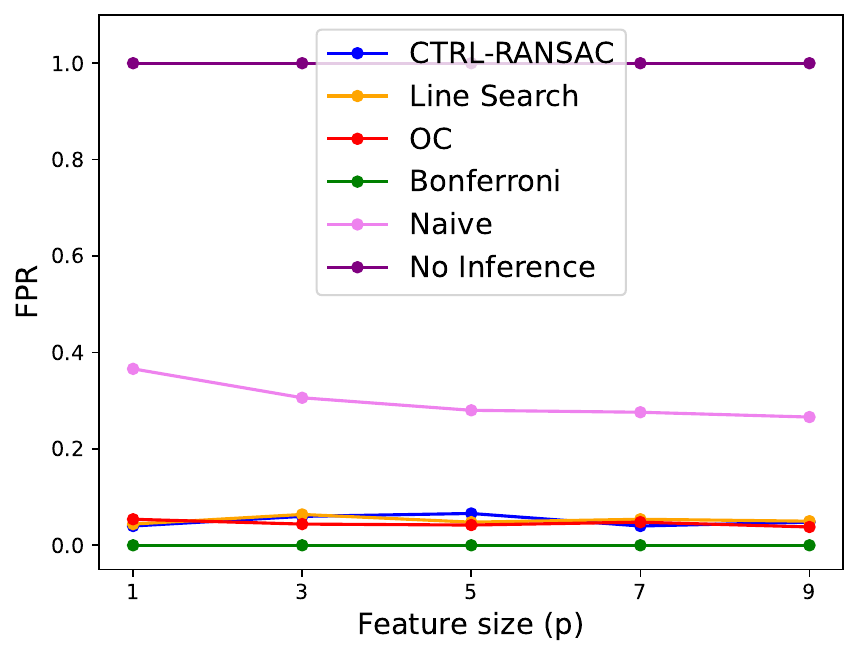}}\hfill
        \subfloat[]{\includegraphics[width=0.33\textwidth]{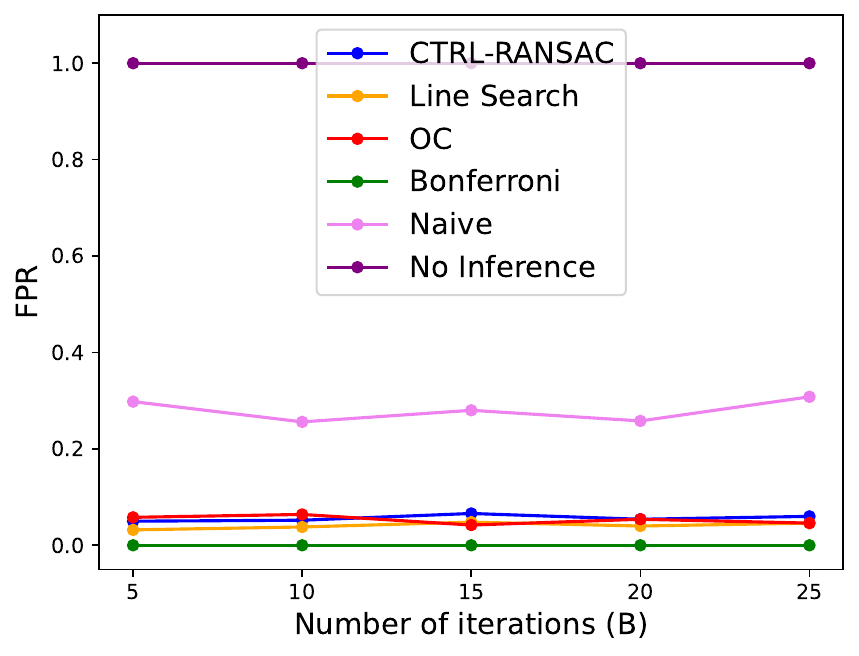}}\hfill
        \subfloat[]{\includegraphics[width=0.33\textwidth]{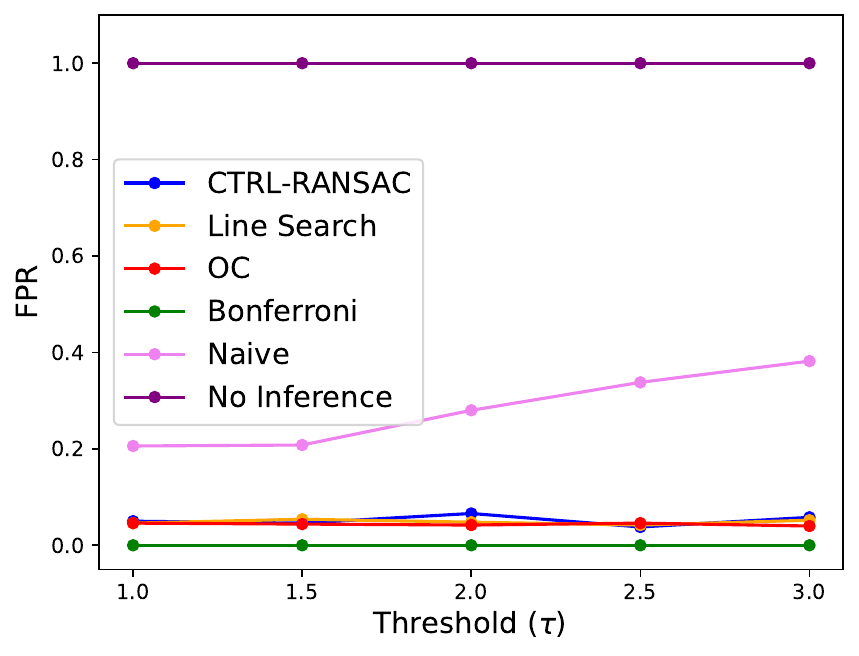}}\\
        (2) Correlation \\[0.5em]

        \caption{FPRs of \texttt{CTRL-RANSAC, Line Search, OC, Bonferroni, Naive} and \texttt{No inference} in various setups.}
        \label{fig:fpr_additional}
    \end{minipage}
\end{figure*}

\begin{figure*}[!t]
    \centering
    \begin{minipage}{\textwidth}
        \centering
        
        \subfloat[]{\includegraphics[width=0.24\textwidth]{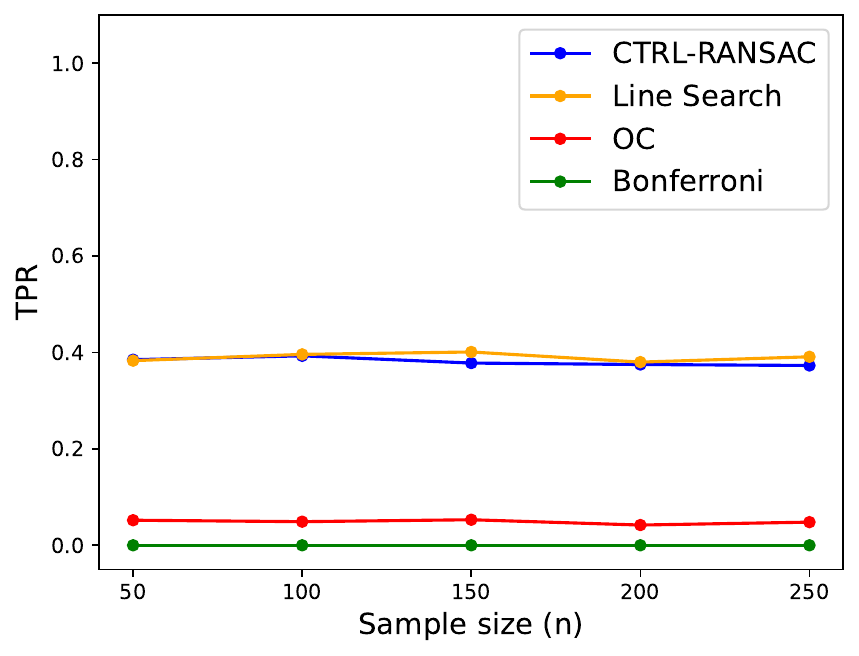}}\hfill
        \subfloat[]{\includegraphics[width=0.24\textwidth]{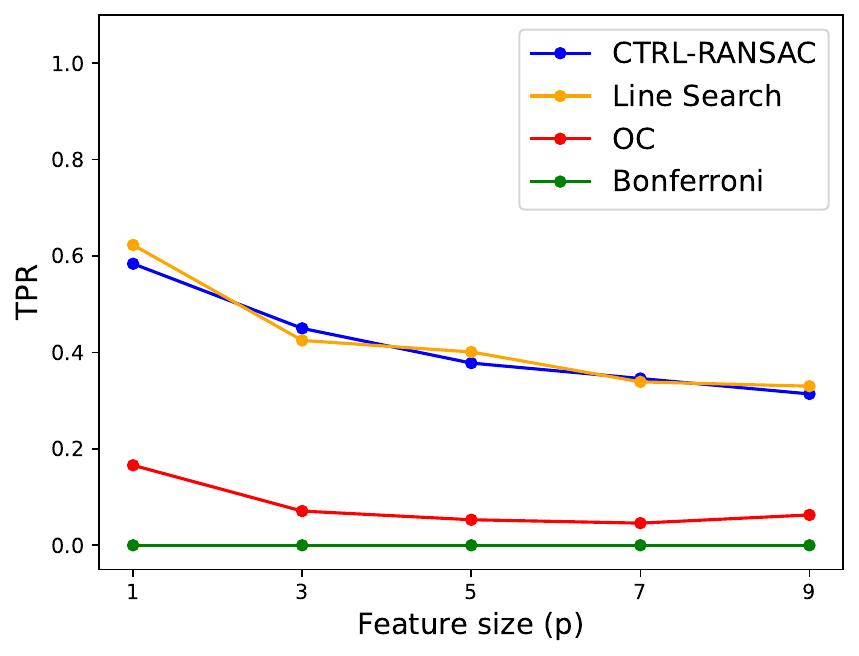}}\hfill
        \subfloat[]{\includegraphics[width=0.24\textwidth]{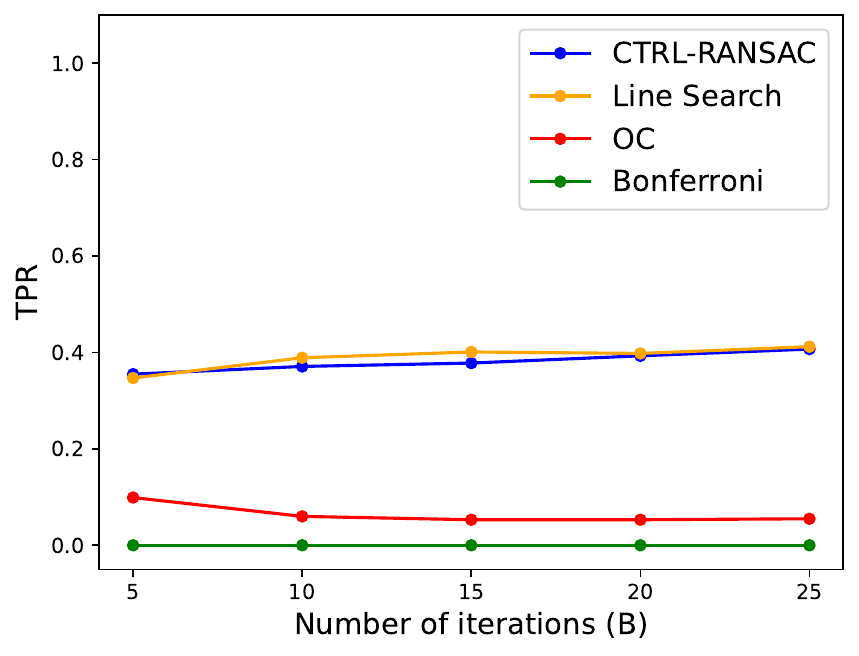}}\hfill
        \subfloat[]{\includegraphics[width=0.24\textwidth]{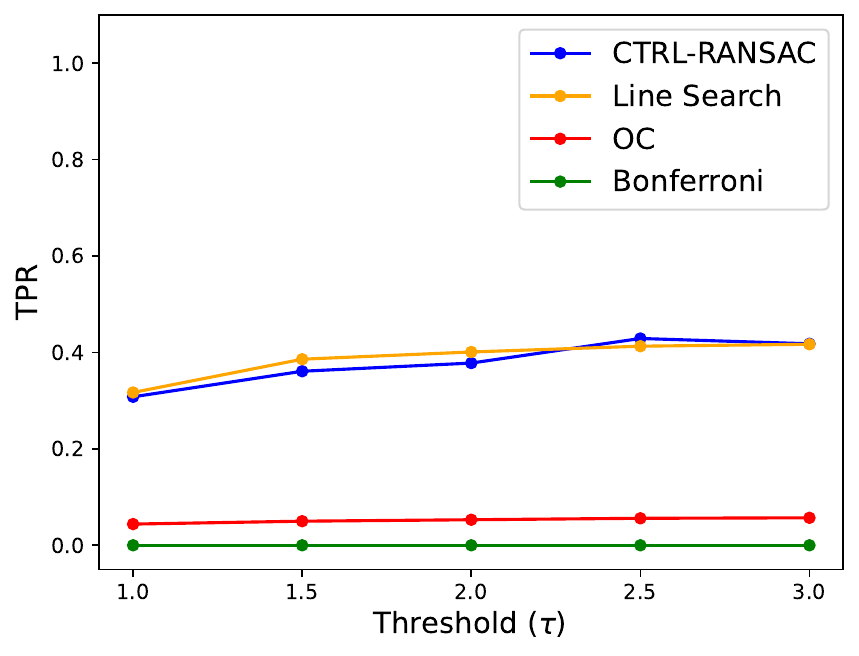}}\\
        (1) Independence \\[0.5em]
        
        \subfloat[]{\includegraphics[width=0.24\textwidth]{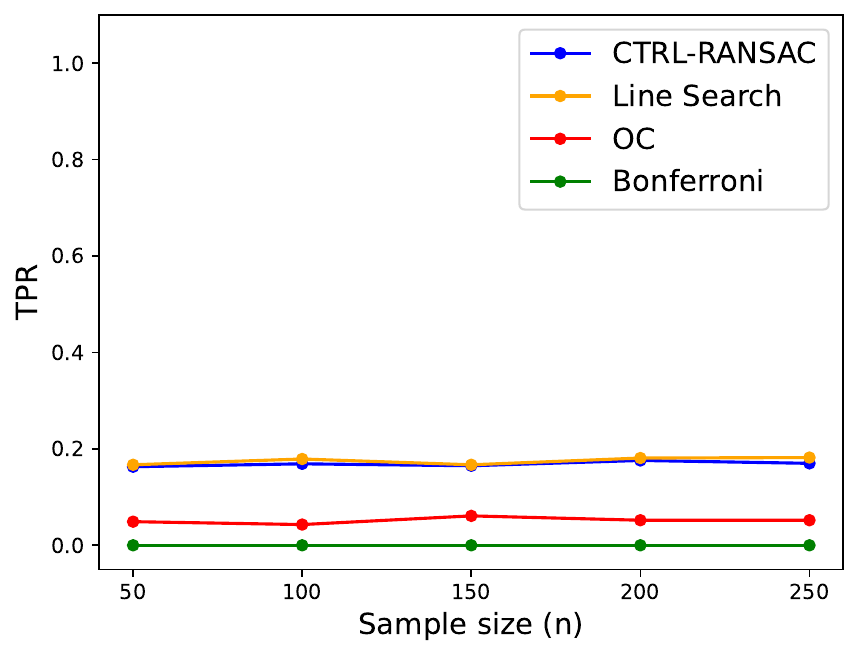}}\hfill
        \subfloat[]{\includegraphics[width=0.24\textwidth]{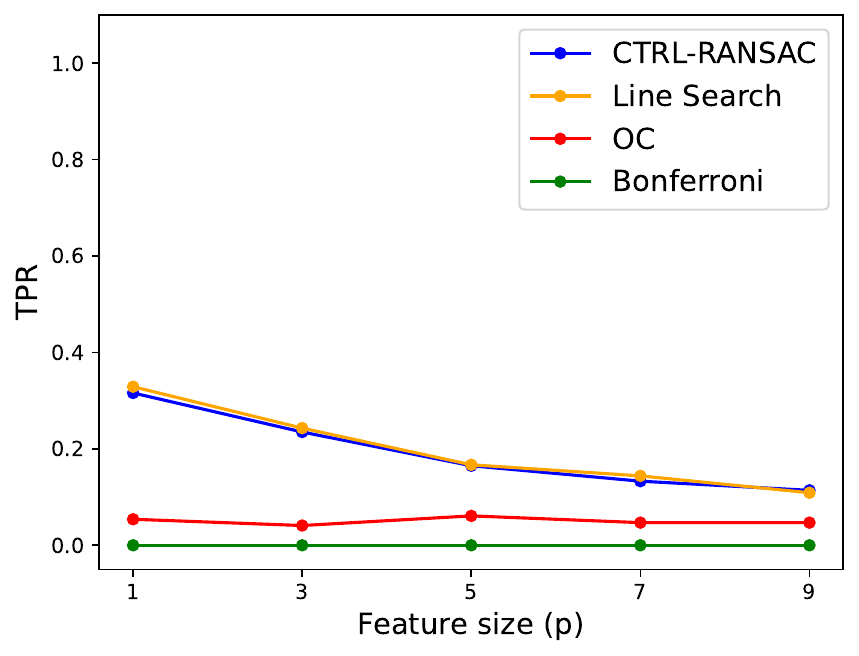}}\hfill
        \subfloat[]{\includegraphics[width=0.24\textwidth]{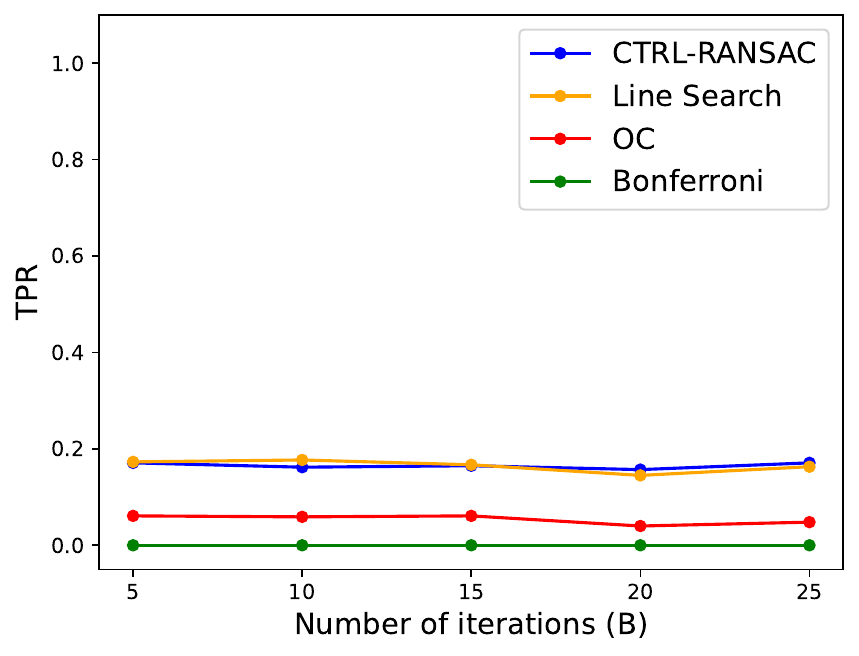}}\hfill
        \subfloat[]{\includegraphics[width=0.24\textwidth]{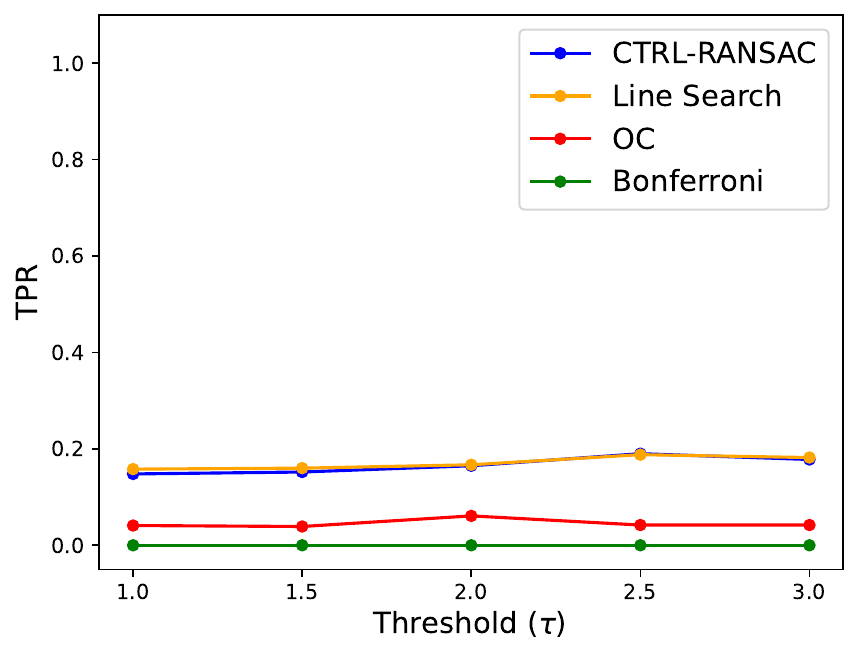}}\\
        (2) Correlation \\[0.5em]

        \caption{True positive rates of \texttt{CTRL-RANSAC, Line search, OC}, and \texttt{Bonferroni} in various setups.}
        \label{fig:tpr_additional}
    \end{minipage}
\end{figure*}

\textbf{Robustness experiments.} We conducted the following experiments:
\begin{itemize}
    \item Non-normal data: We considered the noise following Laplace distribution, skew normal distribution (skewness coefficient 10) and $t_{20}$ distribution. We set $n \in \{ 50, 100, 150, 200, 250 \}$, $p=5$, $B=15$ and $\tau = 2$. Each experiment was repeated 1000 times. We tested the FPR on both $\alpha = 0.05$ and $\alpha = 0.1$. The FPR results are shown in Fig. \ref{fig:robustness_p_value}a, \ref{fig:robustness_p_value}b, and \ref{fig:robustness_p_value}c. The CTRL-RANSAC controls the FPR well in the Skew normal distribution and the $t_{20}$ distribution. Meanwhile, it performs badly in Laplace distribution.
    \item Estimated variance: the variances of the noise were estimated from the data by using empirical variance. We confirm that CTRL-RANSAC still maintains a good performance on FPR control (Fig. \ref{fig:robustness_p_value}d).
\end{itemize}

\begin{figure}[!t]
    \centering
    \begin{subfigure}[t]{0.35\textwidth} \label{fig:robustness_laplace}
        \centering
        \includegraphics[width=\textwidth]{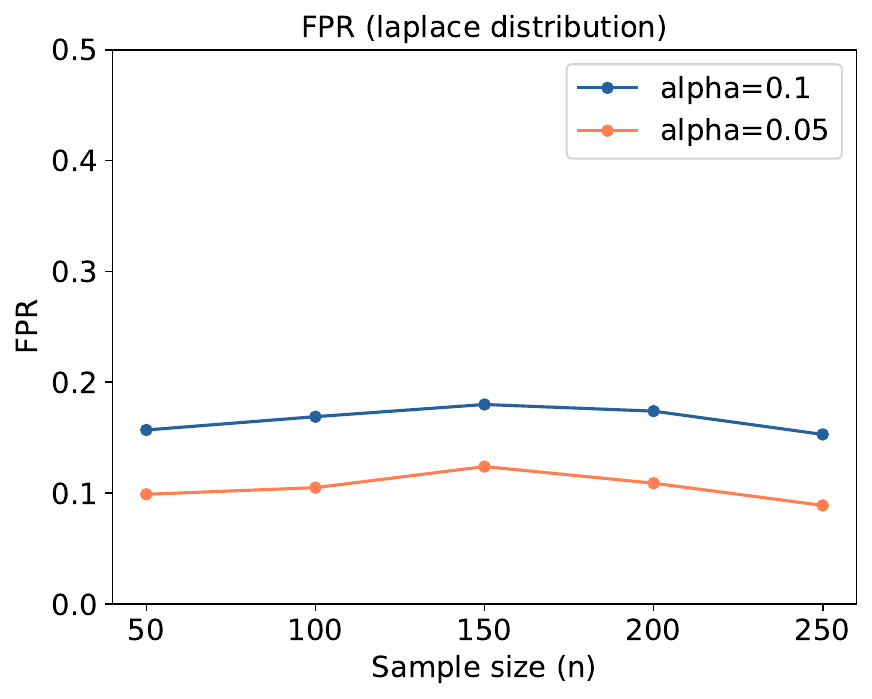}
        \caption{Laplace distribution}
    \end{subfigure}
    \hspace{2mm}
    \begin{subfigure}[t]{0.35\textwidth} \label{fig:robustness_skew}
        \centering
        \includegraphics[width=\textwidth]{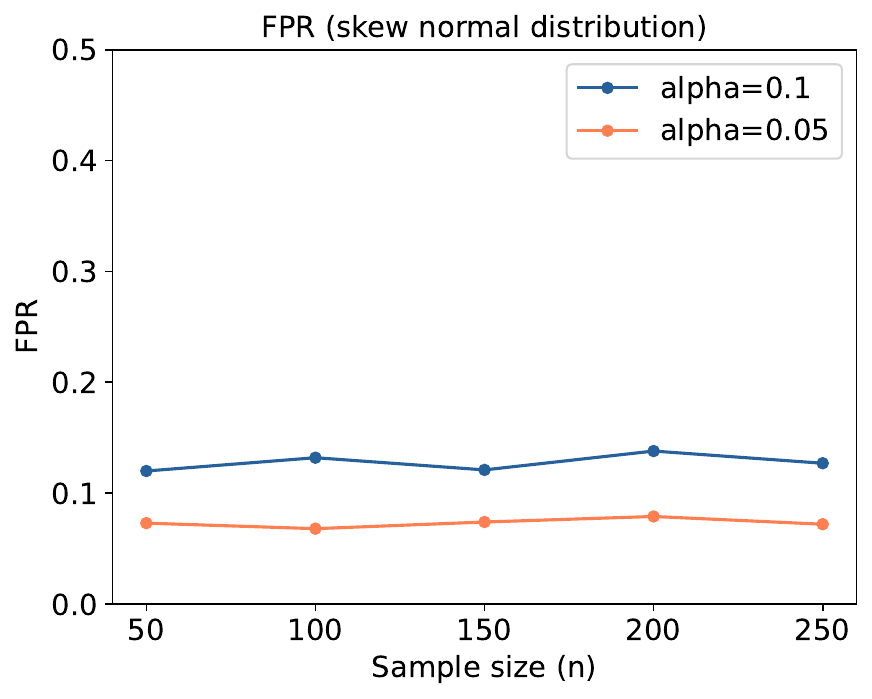}
        \caption{Skew normal distribution}
    \end{subfigure}

    \begin{subfigure}[t]{0.35\textwidth} \label{fig:robustness_t20}
        \centering
        \includegraphics[width=\textwidth]{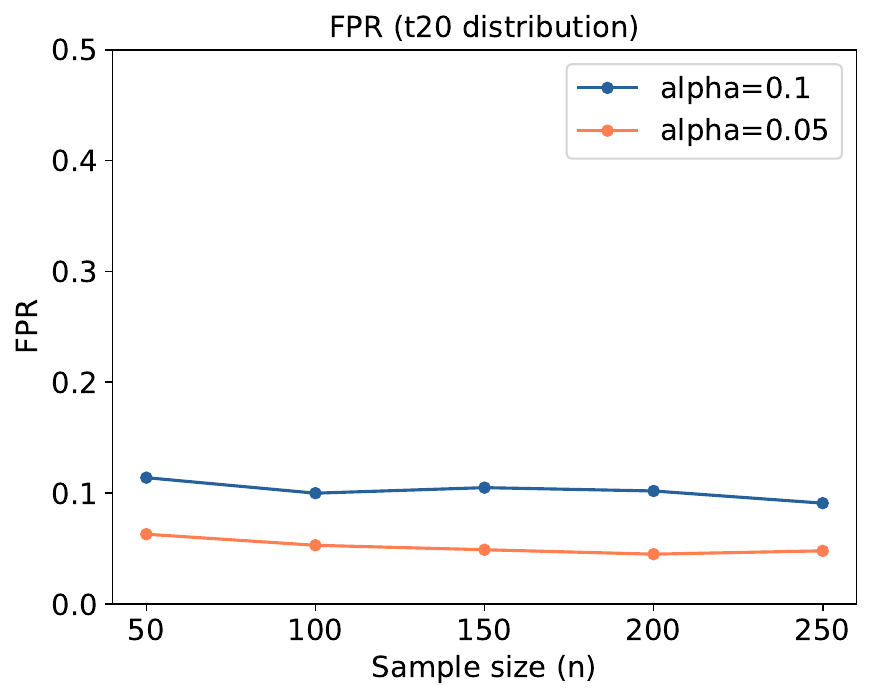}
        \caption{$t_{20}$ distribution}
    \end{subfigure}
    \hspace{2mm}
    \begin{subfigure}[t]{0.35\textwidth} \label{fig:robustness_estimate_sigma}
        \centering
        \includegraphics[width=\textwidth]{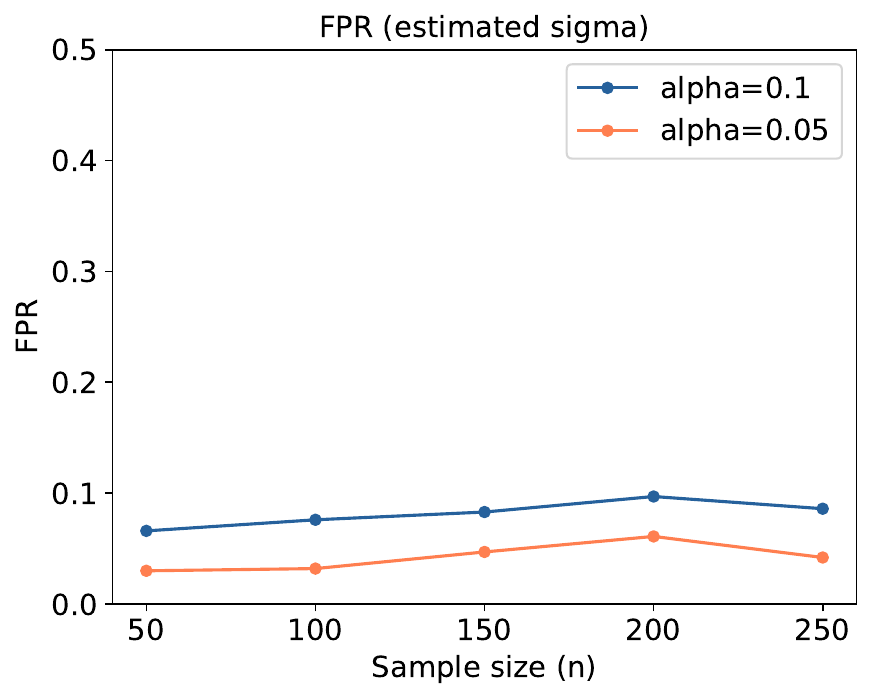}
        \caption{Estimated variance}
    \end{subfigure}
    
    \caption{False positive rate of the proposed CTRL-RANSAC method when data is non-normal or variance is unknown.}
    \label{fig:robustness_p_value}
\end{figure}

\subsection{Details of Line Search Method} \label{app:line_search_method}

\begin{algorithm}[!t]
\caption{\texttt{LineSearch}}
\label{alg:line_search}
\begin{footnotesize}
\textbf{Input:} $(X, \boldsymbol{Y}^{\rm obs}), z_{\rm min}, z_{\rm max}$
\begin{algorithmic}[1]
\vspace{2pt}
    \STATE $\mathcal{O}^{\rm obs} \gets$ Apply RANSAC to $(X, \boldsymbol{Y}^{\rm obs})$
    \vspace{2pt}
    \FOR{$i \in \mathcal{O}^{\rm obs}$}
    \vspace{2pt}
        \STATE Compute $\bm \eta_{i} \gets$ Eq. (\ref{eq:test_direction}), $\boldsymbol{a} \text{ and } \boldsymbol{b} \gets$ Eq. (\ref{eq:parametrized_response_vector})
        \vspace{2pt}
        \STATE $\cZ_{\rm line} \gets$ {\tt line\_search\_identify\_truncation\_region} ($\boldsymbol{a}, \boldsymbol{b}, \mathcal{O}^{\rm obs}, z_{\rm min}, z_{\rm max}$)
        \vspace{2pt}
        \STATE Compute $p^{\rm selective}_{i} \gets$ Eq. (\ref{eq:selective_p_reformulated}) with $\cZ = \cZ_{\rm line}$
        \vspace{2pt}
    \ENDFOR
\end{algorithmic}
\textbf{Output:} $\{p^{\rm selective}_{i}\}_{i \in \mathcal{O}^{\rm obs}}$
\end{footnotesize}
\end{algorithm}

\begin{algorithm}[!t]
\caption{\texttt{line\_search\_identify\_truncation\_region}}
\label{alg:line_search_identify_truncation_region}
\begin{footnotesize}
\textbf{Input:} $\bm a, \bm b, \cO^{\rm obs}, z_{\rm min}, z_{\rm max}$
\begin{algorithmic}[1]
\vspace{2pt}
    \STATE Initialization: $v = 1, z_v = z_{\rm min}, \cV = \emptyset$
    \WHILE{$z_v < z_{\rm max}$}
        \STATE $\cT_v \gets \text{ RANSAC-based AD on } \bm a + \bm b z_v$
        \STATE Compute $[L_v, R_v] = \cZ_v \gets$ Eq. \eq{eq:quadratic_inequalities_system_cZ_v}
        \IF{$\cM (\cT_v) = \cO^{\rm obs}$}
            \STATE $\cV = \cV \cup \{v\}$
        \ENDIF
        \STATE $v \gets v + 1, z_v = R_v$
    \ENDWHILE
    \STATE $\cZ_{\rm line} \gets \bigcup_{v \in \cV} \cZ_v$
\end{algorithmic}
\textbf{Output:} $\cZ_{\rm line}$
\end{footnotesize}
\end{algorithm}

Inspired by the works of \citet{le2021parametric} and \citet{le2024cad}, the Line Search method involves two key steps:
\begin{itemize}
    \item \textbf{Step 1 (extra-conditioning):} decomposing the problem into multiple sub-problems by conditioning on each selection event of RANSAC. Each sub-problem can then be solved through a finite number of operations.
    \item \textbf{Step 2 (line search):} combining multiple extra-conditioning steps and check the condition $\cO(\bm a + \bm b z) = \cO^{\rm obs}$ to obtain $\cZ_{\rm line}$.
\end{itemize}
Specifically, let $V$ be a number of all possible sets of steps performed by RANSAC along the parametrized line. The entire one-dimensional space $\RR$ can be decomposed as:
\begin{equation*}
    \RR = \bigcup_{v \in [V]} \left\{ z \in \RR 
    ~ \big| ~ 
    \cT_{\bm a + \bm b z} = \cT_v \right\},
\end{equation*}
where $\cT_{\bm a + \bm b z}$ denotes a set of steps performed by RANSAC when applying with $\bm Y (z) = \bm a + \bm b z$. Our goal is to search a set
\begin{equation*}
    \cV = \left\{ v : \cM (\cT_v) = \cO^{\rm obs} \right\},
\end{equation*}
for all $v \in [V]$, the function $\cM$ is defined as:
\begin{equation*}
    \cM : (\cT_{\bm a + \bm b z}) \mapsto \cO_{\bm a + \bm b z}.
\end{equation*}
Finally, the region $\cZ_{\rm line}$ can be obtained as follows:
\begin{align*}
    \cZ_{\rm line} 
    & = \left\{ z \in \RR ~ \big| ~ \cO_{\bm a + \bm b z} = \cO^{\rm obs} \right\} \\
    & =  \bigcup_{v \in \cV} \left\{ z \in  \RR ~ \big| ~ \cT_{\bm a + \bm b z} = \cT_v \right\}.
\end{align*}

\textbf{Extra-conditioning (step 1).} For any $v \in [V]$, we define the region for the extra-conditioning as:
\begin{equation*}
    \cZ_v = \left\{ z \in \RR ~ \big| ~ \cT_{\bm a + \bm b z} = \cT_v \right\}.
\end{equation*}
We derive $\cZ_v$ by solving a system of quadratic inequalities based on selection events of RANSAC. Specifically,
\begin{equation}
\label{eq:quadratic_inequalities_system_cZ_v}
    \cZ_v = \left\{ z \in \RR ~ \big| ~ \bm w + \bm r z + \bm o z^2 \leq \bm 0 \right\},
\end{equation}
where $\bm w$, $\bm r$, and $\bm o$ are the vectors specifically defined in the context of RANSAC.



\textbf{Line search (step 2).} The strategy to identify $\cV$ by repeatedly applying RANSAC-based AD to a sequence of $\bm Y (z) = \bm a + \bm b z$ within sufficiently wide range of $z \in [z_{\rm min}, z_{\rm max}]$. For simplicity, we consider the case in which $\cZ_v$ is an interval, so we denote $\cZ_v = [L_v, R_v]$. The line search procedure is summarized in Algorithm \ref{alg:line_search_identify_truncation_region}. The entire steps of the Line Search method are summarized in Algorithm \ref{alg:line_search}.

\textbf{Complexity. } We analyze the complexity for each outlier in $\cO^{\rm obs}$. As shown in Algorithm \ref{alg:line_search}, the key task for each outlier is identifying the region $\cZ_{\rm line}$, which is the most computationally demanding step. This has a time complexity of $O(T\times B \times n)$, where $T$ is the number of iterations in the search procedure. Specifically, in each iteration of the search procedure, computing $\cZ_v$ requires $O(B \times n)$ operations, as it involves conditioning on the classification of each data point for each model. The main reason for using CTRL-RANSAC over the Line Search method lies in the number of iterations. Theoretically, $T$ can grow exponentially with respect to $n$ and $B$ in the worst case, making the Line Search method less computationally stable than CTRL-RANSAC.

\end{document}